\newcommand{\px}{\mathbf{x}}
\newcommand{\pv}{\mathbf{v}}
\newcommand{\py}{\mathbf{y}}
\newcommand{\V}{\mathcal{V}}
\newcommand{\X}{\mathcal{X}}
\newcommand{\reals}{\mathbb{R}}
\newcommand{\expect}{\mathbb{E}}
\newtheorem{theorem}{Theorem}
\newtheorem{assumption}{Assumption}
\newtheorem{proposition}{Proposition}
\newtheorem{corollary}{Corollary}
\newtheorem{lemma}{Lemma}
\newtheorem{remark}{Remark}
\title{Error Bounds For Gaussian Process Regression Under Bounded Support Noise With Applications To Safety Certification}
\author {
    % Authors
    Robert Reed\textsuperscript{\rm 1},
    Luca Laurenti\textsuperscript{\rm 2},
    Morteza Lahijanian\textsuperscript{\rm 1}
}
\begin{document}

\maketitle

\begin{abstract}
Gaussian Process Regression (GPR) is a powerful and elegant method for learning complex functions from noisy data with a wide range of applications, including in safety-critical domains. Such applications have two key features: (i) they require rigorous error quantification, and (ii) the noise is often bounded and non-Gaussian due to, e.g., physical constraints. While error bounds for applying GPR in the presence of non-Gaussian noise exist, they tend to be overly restrictive and conservative in practice. In this paper, we provide novel error bounds for GPR under bounded support noise. Specifically, by relying on concentration inequalities and assuming that the latent function has low complexity in the reproducing kernel Hilbert space (RKHS) corresponding to the GP kernel, we derive both probabilistic and deterministic bounds on the error of the GPR. We show that these errors are substantially tighter than existing state-of-the-art bounds and are particularly well-suited for GPR with neural network kernels, i.e., Deep Kernel Learning (DKL). Furthermore, motivated by applications in safety-critical domains, we illustrate how these bounds can be combined with stochastic barrier functions to successfully quantify the safety probability of an unknown dynamical system from finite data. We validate the efficacy of our approach through several benchmarks and comparisons against existing bounds. The results show that our bounds are consistently smaller, and that DKLs can produce error bounds tighter than sample noise, significantly improving the safety probability of control systems.
\end{abstract}

\section{Introduction}
\label{sec:intro}

Gaussian Process Regression (GPR) is a powerful and elegant method for learning complex functions from noisy data, renowned for its rigorous uncertainty quantification~\cite{rasmussen:book:2006}. This makes GPR particularly valuable in the control and analysis of \emph{safety-critical} systems \cite{berkenkamp2017safe,lederer2019local,lederer2019uniform,jagtap2020formal,Jackson2021,jackson2021formal,griffioen2023probabilistic, wajid2022formal}. However, in such systems, the underlying assumption of Gaussian measurement noise often does not hold. Measurements are typically filtered to reject outliers, and physical systems cannot traverse infinite distances in a single time step. Consequently, bounded support noise distributions provide a more accurate representation for many cyber-physical systems. But, without the Gaussian assumption, the mean and variance predictions of GPs cannot be directly used to represent confidence in the underlying system. To address this, recent works~\cite{hashimoto2022learning, Seeger, Chowdhury, jackson2021formal}
have diverged from a fully Bayesian approach and developed GPR error bounds under mild assumptions on the sample noise distribution, specifically sub-Gaussian and bounded noise (see Figure~\ref{fig:1d_gp_fnc}). While these bounds are useful, they tend to be overly restrictive and conservative, relying on parameters that must be over-approximated. This work aims to overcome these limitations by providing novel, tighter error bounds for GPR under bounded support noise, enhancing their applicability in safety-critical domains.

In this paper, we present novel error bounds for GPR under bounded support noise. Our key insight is that two factors contribute to the regression error: the error due to GP mean prediction at a point without noise, and the error due to noise perturbing the noise-free prediction by a factor proportional to the noise's support. Then, by relying on concentration inequalities and assuming the latent function has low complexity in the reproducing kernel Hilbert space (RKHS) corresponding to the GP kernel, we derive both probabilistic (i.e., with some confidence) and deterministic bounds on these error terms. Specifically, we use convex optimization and Hoeffding's Inequality to obtain accurate bounds. We demonstrate that these errors are substantially tighter than existing bounds, as depicted in Figure~\ref{fig:1d_gp_fnc}. Furthermore, our bounds are particularly well-suited for GPs with neural network kernels, such as Deep Kernel Learning (DKL). We also illustrate how these bounds can be combined with stochastic barrier functions~\cite{kushner1967stochastic,santoyo2021barrier}, which are a generalization of Lyapunov functions, to quantify the safety probability of an unknown control system from finite data. We validate our approach through several benchmarks and comparisons, showing that our bounds are consistently smaller than the state-of-the-art, resulting in more accurate safety certification of control systems.

The key contributions of this work are three-fold: 
(i) derivation of novel probabilistic and deterministic error bounds for GP regression with bounded support noise, (ii) demonstration of the application of these bounds with stochastic barrier functions to effectively quantify the safety probability of unknown control systems by solely using data, and (iii) validation of the approach through extensive benchmarks and comparisons, showing consistently tighter error bounds than state-of-the-art methods and enhanced accuracy in determining safety probabilities for barrier certificates.

\begin{figure}[t]
    \centering
    \begin{subfigure}[c]{0.23\textwidth}
        \includegraphics[width=\textwidth]{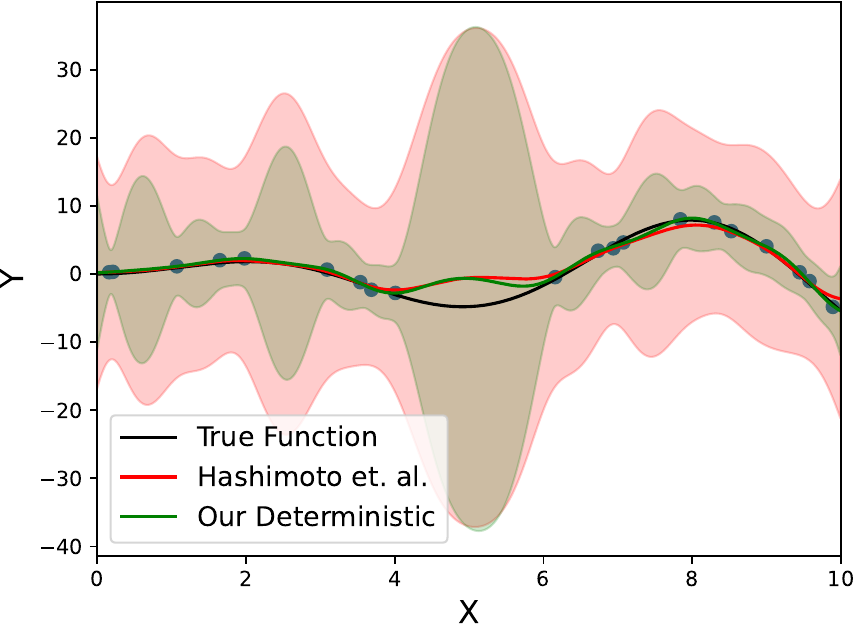}
        \caption{Deterministic Bounds} \label{fig:1d_GP_a}
    \end{subfigure}
    \begin{subfigure}[c]{0.23\textwidth}
        \includegraphics[width=\textwidth]{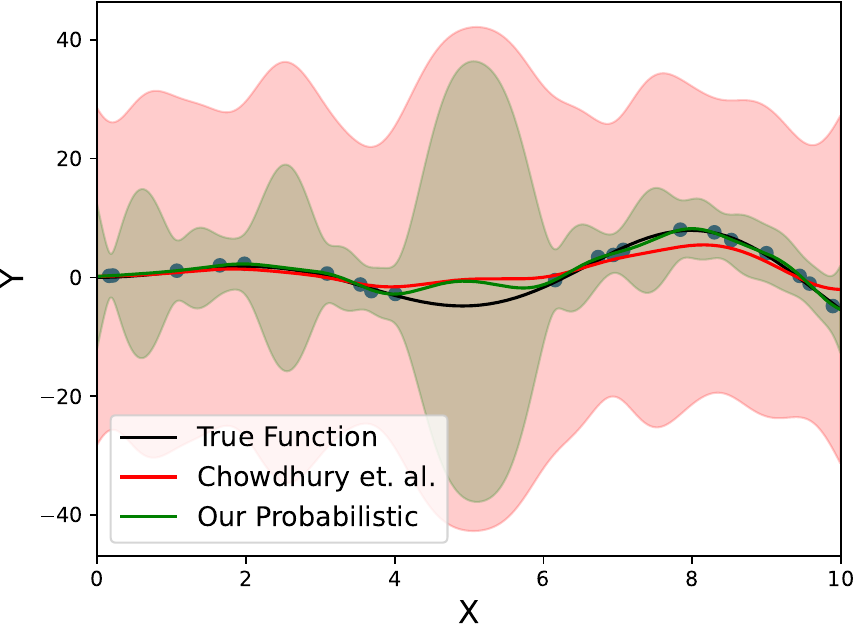}
        \caption{Probabilistic Bounds} \label{fig:1d_GP_b}
    \end{subfigure}
    \caption{Predictive mean and error bounds when learning from 20 samples of $y = x\sin(x) + v$ with $|v| < 0.5$. In (a) we plot the comparison of deterministic error bounds and mean predictions, and in (b) we show the probabilistic bounds that hold with 95\% probability. We set $\sigma_n = 0.1$ for our predictions, $\sigma_n = 0.5$ for Hashimoto et al. as per Lemma \ref{lemma:2}, and $\sigma_n^2 = 1 + 2/20$ for Chowdhury et al. as per Lemma \ref{lemma:1}. Note bounds for \citet{abbasi2013online} are nearly identical to Chowdhury et al. in this example, see Appendix \ref{sec:appendix_prob}. 
    }
    \label{fig:1d_gp_fnc}
\end{figure}

\paragraph{Related Works}
Several studies consider relaxations of the Gaussian assumption in the GP regression setting.  In particular,
\citeauthor{snelson2003warped} 
propose an approach that automatically generates a monotonic, non-linear warping function to map the observations into a latent space where it can be best modelled as a GP. However there is no guarantee that the warped data follows a GP.  Also, the method requires an inverse of the warping function, but it is generally only available in an approximate form. Hence, the model cannot be used to generate formal guarantees.
A closer work to ours is \cite{jensen2013bounded}, which derives posteriors for GP regression when the observation space has bounded support, and hence the noise has bounded support. However, similar to \cite{snelson2003warped}, the presented derivations provide approximate posteriors for the GP models, which limits their viability in safety-critical settings.

Works by \citeauthor{Seeger, abbasi2013online, Chowdhury, hashimoto2022learning} provide formal regression error bounds for GP models under non-Gaussian noise.
Specifically, work \cite{Seeger} first develops a framework for GP regression under the assumption that noise is $R$-sub-Gaussian and then formally quantifies probabilistic bounds on the GP prediction errors.  Using a similar framework, \cite{Chowdhury} derives tighter probabilistic bounds. Work \cite{abbasi2013online} derives similar bounds for Kernel Ridge Regression, which uses the same mean prediction as a GPR without a posterior variance.
While applicable, 
the setting in each of these works is more general than ours, which results in larger error terms when assessed with bounded support noise. Work \cite{hashimoto2022learning} specifically focuses on bounded noise and derives a deterministic error bound.  This bound is generally tighter than the prior probabilistic bounds; nevertheless, it becomes loose as the size of the support increases.  Since both error bounds in \cite{Chowdhury, hashimoto2022learning} are directly applicable to our scenario, we show their derivation in Section~\ref{sec:problem} and compare our results against them. In the experiments, we also compare against \cite{Seeger, abbasi2013online}. 

Finally, recent work \cite{maddalena2021deterministic} derives a tighter deterministic error bound when compared to the results of \cite{hashimoto2022learning} by using a Kernel Ridge Regression approach.  However, it requires the kernel to be strictly positive definite. While the squared exponential kernel, which is a popular choice in the controls application, can satisfy this requirement, the kernel matrix can be ill-conditioned resulting in inaccurate inversions. This restriction also limits the use of more expressive kernels that are positive semi-definite which may innately improve the error bounds, such as deep kernels. We show that our bounds are well-suited for deep kernels.

\section{Setup and Background}
\label{sec:problem}
We consider a stochastic system of the following form: 
\begin{align}
    \py = f(\px) + \pv, \label{true_dynamics}
\end{align}
where $\px \in \reals^d$ is the input, $\py \in \reals$ is the output, and $\pv \in V \subset \reals$ is an additive zero-mean\footnote{The assumption that $\expect[\pv] = 0$ is without loss of generality. In fact, if $\expect[\pv]\neq0$, we can add a bias term to $f$ to shift the expectation to 0.} random variable (noise) with bounded support $\sigma_v \in \reals_{\geq 0}$, i.e., $V = \{v \in \reals \mid |v| \leq \sigma_v \}$, and unknown distribution. Function $f: \reals^d \rightarrow \reals$ is \emph{unknown}.  
The main goal of the paper
is to regress $f$ from a dataset of input-output pairs $D = \{(x_i, y_i)\}_{i=1}^m$ and derive bounds on the error between the regressed model predictions and the true function $f$. However, taking $f$ as entirely unknown can lead to an ill-posed problem; hence, we impose a standard smoothness (well-behaved) assumption \cite{Seeger, Jackson2021} on $f$. 
\begin{assumption} [RKHS Continuity] \label{assumption:1}
For a compact set $X \subset \reals^d$, let $\kappa: \reals^d \times \reals^d \rightarrow \reals_{\geq 0}$ be a given continuously differentiable kernel and $\mathcal{H}_\kappa(X)$ the reproducing kernel Hilbert space (RKHS) corresponding to $\kappa$ over $X$ with induced norm $\|\cdot\|_\kappa$. Let $f(\cdot) \in \mathcal{H}_\kappa(X)$. Moreover, there exists a constant $B \geq 0$ s.t. $\|f(\cdot)\|_\kappa \leq B$.
\end{assumption}
Assumption \ref{assumption:1} implies that $f(x) = \sum_{n=1}^\infty \alpha_n \kappa(x, x_n)$ for representer points $\alpha_n \in \reals$ and $x_n \in \reals^d$. Note that for most kernels $\kappa$  used in practice, such as the squared exponential kernel, $\mathcal{H}_\kappa$ 
is dense in the space of continuous functions \cite{steinwart2001influence}.

We stress that, in this paper, we do not assume that $f$ is a sample from a Gaussian process prior. Moreover, since the noise is non-Gaussian, the likelihood model is also not Gaussian. Nevertheless, similar to the \emph{agnostic} setting described in \cite{Seeger,Chowdhury,hashimoto2022learning}, we would still like to use the GP regression framework to regress $f$ with misspecified noise and prior models. 
In the remainder of this section, we provide a brief background on GP regression and state the existing error bounds on the learning error.  
In Section \ref{sec:error_bound}, 
we derive new bounds  and discuss why they are tighter than the existing ones. 
We also illustrate the practical utility of these bounds in a control application in Section~\ref{sec:applications}. 
Finally, we provide empirical evaluations of the new errors bounds  in Section~\ref{sec:experiment}.

\subsection{Existing Error Bounds Using GPs and RKHS}

Let $D = \{(x_i, y_i)\}_{i=1}^m$ be a dataset consisting of $m$ input-output observations pairs from System~\eqref{true_dynamics}. A popular method to predict the output of $f$ at a new input $x^*$ with confidence on the predicted value is Gaussian Process regression (GPR). 
A Gaussian Process (GP) is a collection of random variables, such that any finite collection of those random variables is jointly Gaussian \cite{rasmussen:book:2006}. GPR starts by placing a Gaussian prior over $f$ through use of a kernel $\kappa: \reals^d \times \reals^d \rightarrow \reals_{\geq 0}$. Then,  under the assumption that $\pv$ is a zero-mean Gaussian independent random variable with variance $\sigma_n^2$ for any new input $x^*,$ the predictive mean $\mu_{D}(x^*)$ and variance $\sigma^2_{D}(x^*)$ can be derived as:
\begin{align}
    &\mu_{D}(x^*) = K_{x^*, \X}(K_{\X,\X} + \sigma_n^2 I)^{-1} Y, \label{mean_pred}\\
    &\sigma^2_{D}(x^*) = K_{x^*, x^*} - K_{x^*, \X}(K_{\X,\X} + \sigma_n^2 I)^{-1} K_{\X, x^*}, \label{covar_pred}
\end{align}
where $\X = [x_1, \ldots, x_{m}]^T$ and $Y = [y_1, \ldots, y_m]^T$ are respectively the vectors of input and output data, $K_{\X,\X}$ is a matrix whose $i$-th row and $j$-th column is $\kappa(x_i, x_j)$, and $K_{x,\X} = K_{\X, x}^T$ are vectors whose $i$-th entry is $\kappa(x, x_i)$. 

As already mentioned, in our setting in System \eqref{true_dynamics}, we do not assume $f$ is a sample from a GP and the noise $\pv$ is strictly bounded (hence, non-Gaussian). Fortunately, recent works show that, even under such settings, as long as Assumption~\ref{assumption:1} holds, the GP analytical posterior prediction can still be used outside of the Bayesian framework, i.e., even with misspecified noise and priors. 
Specifically, Work \cite{Chowdhury} shows that when the measurement noise is conditionally $R$-sub-Gaussian, which includes bounded support distributions, probabilistic bounds on the error between the prediction $\mu_D$ and $f$ can be derived as follows.
\begin{lemma}[{\cite[Theorem 2]{Chowdhury}}]
    Let $X \subset \reals^d$ be a compact set, $B > 0$ be the bound on $\|f\|_{\kappa} \leq B$, and $\Gamma \in \reals_{>0}$ be the maximum information gain of $\kappa$.
    If noise $\pv$ has a $R$-sub-Gaussian distribution and $\mu_{D}$ and $\sigma_{D}$ are obtained via Equations \eqref{mean_pred}-\eqref{covar_pred} on a dataset $D$ of size $m$ with $\sigma_n^2 = 1 + 2/m$, then it holds that, for every $\delta \in (0,1]$,
    \begin{align}
        \mathbb{P}\Big(\forall x \in X,\; |\mu_{D}(x) - f(x)| \leq \beta(\delta)\sigma_{D}(x) \Big) \geq 1 - \delta,
    \end{align}
    where $\beta(\delta) = B + R\sqrt{2(\Gamma + 1+ \log{(1/\delta}))}$.
    \label{lemma:1}
\end{lemma}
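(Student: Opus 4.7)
The plan is to reproduce the standard self-normalized martingale argument of Chowdhury and Gopalan, which proceeds in three stages: an algebraic decomposition of the GPR mean error, a concentration bound on the stochastic part, and an identification of the resulting log-determinant quantity with the information gain $\Gamma$.

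First, I would exploit the fact that, with $\sigma_n^2 = 1 + 2/m$ acting as a ridge penalty, the GPR mean prediction in \eqref{mean_pred} coincides with a kernel ridge regression estimator in $\mathcal{H}_\kappa(X)$. Writing $V = [v_1,\dots,v_m]^T$ so that $Y = F + V$ with $F = [f(x_1),\dots,f(x_m)]^T$, and using the reproducing property $f(x) = \langle f, \kappa(x,\cdot)\rangle_\kappa$, a routine manipulation yields the exact identity
\begin{align}
\mu_D(x) - f(x) = K_{x,\X}\bigl(K_{\X,\X} + \sigma_n^2 I\bigr)^{-1} V \;-\; \sigma_n^2\,\bigl\langle f,\, h_x\bigr\rangle_\kappa,
\end{align}
where $h_x(\cdot) = \sum_i [(K_{\X,\X}+\sigma_n^2 I)^{-1}K_{\X,x}]_i\,\kappa(\cdot,x_i)$. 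The second, deterministic term is a ``bias'' contribution controlled by $\|f\|_\kappa \le B$; a direct Cauchy–Schwarz computation in the RKHS shows that it is bounded in absolute value by $B\,\sigma_D(x)$.

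Second, for the stochastic term $K_{x,\X}(K_{\X,\X}+\sigma_n^2 I)^{-1}V$, I would apply the self-normalized concentration inequality for $R$-sub-Gaussian martingales (Theorem 1 of Abbasi-Yadkori et al., restated in Chowdhury–Gopalan). Viewing $\{\kappa(\cdot, x_i)\}$ as feature vectors in $\mathcal{H}_\kappa$ and $\{v_i\}$ as a conditionally $R$-sub-Gaussian sequence, the inequality gives, with probability at least $1-\delta$ and simultaneously for all $x \in X$,
\begin{align}
\bigl|K_{x,\X}(K_{\X,\X}+\sigma_n^2 I)^{-1}V\bigr| \;\le\; R\,\sigma_D(x)\,\sqrt{2\log\bigl(\det(I + \sigma_n^{-2}K_{\X,\X})^{1/2}/\delta\bigr)}.
\end{align}
The uniformity over $x$ is automatic because the same noise vector $V$ appears on the right-hand side; $x$ only enters through the deterministic scaling $\sigma_D(x)$.

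Third, I would bound the log-determinant using the information gain. By definition, $\frac{1}{2}\log\det(I + \sigma_n^{-2}K_{\X,\X})$ equals the mutual information between $F$ and $Y$ under the surrogate Gaussian model, and is upper bounded by $\Gamma$ (up to the constant coming from $\sigma_n^2 = 1 + 2/m$, which contributes the additional ``$+1$'' inside the square root). Substituting this into the previous display gives the noise term bound $R\sqrt{2(\Gamma + 1 + \log(1/\delta))}\,\sigma_D(x)$. Adding the bias bound $B\,\sigma_D(x)$ yields the claim with $\beta(\delta) = B + R\sqrt{2(\Gamma+1+\log(1/\delta))}$.

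The main obstacle is the self-normalized concentration step: it must be established in an infinite-dimensional RKHS rather than in $\mathbb{R}^d$, which requires either a feature-map truncation argument or the operator-valued version of the Laplace-method of mixtures used by Abbasi-Yadkori. The rest is essentially bookkeeping: checking that $\sigma_n^2 = 1+2/m$ produces the stated constants, verifying the equivalence between the explicit RKHS expression for $\mu_D(x) - f(x)$ and the GPR formula \eqref{mean_pred}, and reading off the relation $\sigma_D^2(x) = \sigma_n^2\,\|\,h_x\|_\kappa^2 + (\text{correction})$ that makes the factor $\sigma_D(x)$ appear cleanly in both terms.
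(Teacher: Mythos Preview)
The paper does not prove this lemma; it is quoted verbatim from \cite[Theorem~2]{Chowdhury} as background, with no proof supplied. There is therefore nothing in the paper to compare your proposal against.

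That said, your sketch is a faithful outline of the original Chowdhury--Gopalan argument: the bias/noise decomposition in the RKHS, the Cauchy--Schwarz bound $B\,\sigma_D(x)$ on the bias, the self-normalized sub-Gaussian concentration for the noise term, and the identification of the log-determinant with the information gain are exactly the ingredients of that proof. Your identification of the infinite-dimensional self-normalized bound as the main technical hurdle is also accurate. If your goal was to supply the missing external proof, the plan is sound; just be aware that the present paper treats the result as a black box and devotes its effort to Lemma~\ref{lemma:3} and Theorems~\ref{my_prob_theorem}--\ref{my_det_theorem} instead.
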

Lemma~\ref{lemma:1} provides probabilistic bounds on the GP regression error $|\mu_{D}(x) - f(x)|$
by relying on the kernel parameters such as RKHS norm bound $B$ and information gain $\Gamma$, which are often difficult to obtain accurately.  
They can however be (conservatively) bounded using techniques introduced in ~\cite{Seeger, jackson2021formal, hashimoto2022learning}.
Another important observation in Lemma~\ref{lemma:1} is that, when noise $\pv$ has a $R$-sub-Gaussian distribution, variance $\sigma_n^2$ is set to $1+2/m$, which adds conservatism to the bounds by substantially increasing the variance of GP predictions. 
Note that to obtain estimates with confidence $1$, i.e., $\delta = 0,$ Lemma \ref{lemma:1} requires infinitely-many samples. 
Alternative probabilistic bounds in this setting are considered by \citeauthor{Seeger}, which makes use of similar parameters, and by \citeauthor{abbasi2013online} which leaves $\sigma_n$ as a decision variable 
(see Appendix \ref{sec:appendix_prob}).
As shown in our experiments, they are also conservative.

In the case that noise $\pv$ has bounded support (i.e., $|\pv| \leq \sigma_v$), deterministic bounds on the prediction error are provided by \citeauthor{hashimoto2022learning}.
\begin{lemma}[{\cite[Lemma 2]{hashimoto2022learning}}] 
    Let $X$, $\kappa$, and $B$ be as in Lemma~\ref{lemma:1}.
    If noise $\pv \in V$ has a finite support $\sigma_v$ (i.e., $|\pv| \leq \sigma_v$) and $\mu_{D}$ and $\sigma_{D}$ are obtained via Equations \eqref{mean_pred}-\eqref{covar_pred} on a dataset $D$ of size $m$ with $\sigma_n = \sigma_v$, then it holds that, for every $ x \in X$,
    \begin{align}
        |\mu_{D}(x) - f(x)| \leq \beta_{T}\sigma_{D}(x),
    \end{align}
    where $\beta_{T} = \sqrt{B^2 - Y^T(K_{\X,\X} + \sigma_v^2I)^{-1}Y + m}$, and $Y$ and $K_{\X,\X}$ are as in~Equations \eqref{mean_pred}-\eqref{covar_pred}.
    \label{lemma:2}
\end{lemma}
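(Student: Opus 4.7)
My plan is to reduce the bound to a classical noise-free RKHS error bound via a standard ``noise-lifting'' construction, in which the bounded noise is absorbed as an extra orthogonal feature of an augmented RKHS.

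First, I would build an augmented kernel on the product index $X \times \{0, 1, \ldots, m\}$ by
\begin{equation*}
\tilde{\kappa}\bigl((x,i),(x',j)\bigr) \;=\; \kappa(x,x') + \sigma_v^2\,\mathbf{1}\{i = j,\ i \geq 1\},
\end{equation*}
reserving the index $i = 0$ for test inputs and $i \in \{1,\ldots,m\}$ for the $i$-th training sample. Every element $\tilde{f}$ of the associated RKHS splits as $\tilde{f}(x,i) = f(x) + \sigma_v w_i$ with $w_0 = 0$, and its norm satisfies $\|\tilde{f}\|_{\tilde{\kappa}}^2 = \|f\|_\kappa^2 + \|w\|_2^2$. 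Choosing $w_i = v_i / \sigma_v$ turns the noisy observations into \emph{exact} interpolation data in the augmented space, since $\tilde{f}(x_i, i) = f(x_i) + v_i = y_i$.

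Next, I would match the GP quantities to noise-free interpolation in the augmented space. At an augmented test input $(x,0)$, the cross-kernel vector reduces to $K_{x,\X}$ (the indicator vanishes because $0 \neq i$ for $i \geq 1$), while the training Gram matrix becomes $K_{\X,\X} + \sigma_v^2 I$. Hence the augmented kernel interpolant at $(x,0)$ equals $\mu_{D}(x)$, the corresponding power function equals $\sigma_{D}(x)$, and $\tilde{f}(x,0) = f(x)$. I would then invoke the classical power-function bound, which follows from the reproducing property plus Cauchy--Schwarz, to obtain
\begin{equation*}
|f(x) - \mu_D(x)| \;\leq\; \sigma_D(x)\,\|\tilde{f} - s\|_{\tilde{\kappa}},
\end{equation*}
where $s$ is the augmented interpolant. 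Since $s$ is the orthogonal projection of $\tilde{f}$ onto the span of $\{\tilde{\kappa}(\cdot,(x_i,i))\}_{i=1}^m$, Pythagoras gives $\|\tilde{f}-s\|_{\tilde{\kappa}}^2 = \|\tilde{f}\|_{\tilde{\kappa}}^2 - Y^T(K_{\X,\X}+\sigma_v^2 I)^{-1}Y$. Finally, $\|f\|_\kappa \leq B$ together with $|v_i| \leq \sigma_v$ yields $\|\tilde{f}\|_{\tilde{\kappa}}^2 \leq B^2 + m$, and substitution produces $\beta_T$ exactly as stated.

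The step I expect to require the most care is the augmentation itself: verifying that the chosen $\tilde{f}$ actually lies in the augmented RKHS (and not only in its closure), checking via a block-matrix identity that the augmented interpolant and power function algebraically reduce to $\mu_D$ and $\sigma_D$, and justifying the Pythagorean decomposition in the finite-dimensional subspace spanned by the training features. Once these bookkeeping details are in place, the remaining chain of inequalities is essentially routine.
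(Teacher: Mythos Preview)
The paper does not prove Lemma~2; it is quoted from \cite{hashimoto2022learning} as prior art, so there is no in-paper argument to compare against directly. The nearest proof the paper supplies is that of Lemma~3 (Appendix), which bounds only the noise-free residual $|W_x f(\X)-f(x)|$ by introducing the semi-inner product $\{f,g\}=\langle f,g\rangle_\kappa-f(\X)^TGg(\X)$ on the \emph{original} RKHS and applying Cauchy--Schwarz there; the noise term $|W_x\V|$ is then handled separately in Theorems~1--2.

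Your noise-lifting construction is correct and is in fact the standard route to Hashimoto's bound. The augmented kernel is a sum $\kappa\oplus\sigma_v^2\delta$, and since the only function that is simultaneously of the form $(x,i)\mapsto f(x)$ and $(x,i)\mapsto\sigma_v w_i$ with $w_0=0$ is zero (set $i=0$), Aronszajn's sum theorem gives the RKHS as the orthogonal direct sum $\mathcal{H}_\kappa\oplus\reals^m$ with exactly your norm formula; membership of $\tilde f$ is then immediate. The identification of the augmented interpolant and power function with $\mu_D$ and $\sigma_D$ is the one-line block check you describe, and the Pythagorean step is just the minimum-norm property of kernel interpolation in a finite-dimensional span. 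One detail worth making explicit in a write-up: your construction automatically yields $\|\tilde f\|_{\tilde\kappa}^2\geq\|s\|_{\tilde\kappa}^2=Y^T(K_{\X,\X}+\sigma_v^2I)^{-1}Y$, so the radicand defining $\beta_T$ is nonnegative and $\beta_T$ is real.

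Compared with the paper's Lemma~3 device, your approach absorbs the noise into the geometry and dispatches both error sources with a single Cauchy--Schwarz, at the price of changing the ambient space; the paper's approach stays in $\mathcal{H}_\kappa$ but must then control $|W_x\V|$ by a separate argument. For proving Lemma~2 as stated, your route is the more economical one.
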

By restricting noise to a bounded set, Lemma~\ref{lemma:2} is able to provide a deterministic bound on the GP regression error $|\mu_{D}(x) - f(x)|$. Unlike the probabilistic error bounds in Lemma~\ref{lemma:1}, the deterministic bound in Lemma~\ref{lemma:2} does not rely on the information gain $\Gamma$, removing a source of conservatism. However, it is generally conservative when $\sigma_v$ is not small, due to placing $\sigma_n=\sigma_v$ in Equations~\eqref{mean_pred}-\eqref{covar_pred}. 

Note that
the bounds in Lemma~\ref{lemma:1} employ assumptions that are too general for our problem, 
i.e., $R$-sub-Gaussian (conditioned on the filtration) vs bounded support noise independent of filtration, 
and rely on parameters that must be approximated. Similarly, the bounds in Lemma~\ref{lemma:2} do not allow probabilistic reasoning and are restrictive when the support on noise is large. Hence, there is a need for 
probabilistic error bounds derived specifically for bounded support noise, as well as a need for an improved deterministic error bound that does not grow conservatively with the size of the support.

\section{Bounded Support Error Bounds}
\label{sec:error_bound}

In this section, we introduce novel probabilistic and deterministic error bounds for GPR for System~\eqref{true_dynamics}, where noise has a bounded support distribution. We show that in this setting, the results of Lemmas~\ref{lemma:1} and \ref{lemma:2} can be substantially improved.

\subsection{Probabilistic Error Bounds}
We begin with the probabilistic bounds. With an abuse of notation, for the vector of input samples $\X = [x_1, \ldots, x_m]^T$, let $f(\X) = [f(x_1), \ldots, f(x_m)]^T$. Then, we observe that the noise output vector $Y$ is such that $Y=f(\X) + \V$, where $\V = [v_1, \ldots, v_m]^T$ is a vector of i.i.d. samples of the noise, each of which is bounded by $\sigma_v$. 
Consequently, from Eqn~\eqref{mean_pred} and by denoting $G = (K_{\X,\X} + \sigma_n^2I)^{-1}$ and $W_x = K_{x,\X}G$, we can bound the GP regression learning error as
\begin{align}
    |\mu_{D}(x) - f(x)| &= |W_x(f(\X) + \V) - f(x)| \\  % |W_xY - f(x)| =
    & \leq |W_xf(\X) - f(x)| + |W_x \V|.
    \label{Eqn:InequalityProof}
\end{align}
Hence, the error is bounded by a sum of two terms: $|W_xf(\X) - f(x)|$, which is the error due to mean prediction at $x$ with no noise, and $|W_x \V|$, which is the error due to the noise with a value at most proportional to $\sigma_v$.
The following lemma, which extends results in \cite{hashimoto2022learning}, bounds the first term.
\begin{lemma} 
    \label{lemma:3} 
    Let $X$, $\kappa$, $B$, and $D$ be as in Lemma~\ref{lemma:1}, and $G = (K_{\X,\X} + \sigma_n^2I)^{-1}$, $W_x = K_{x,\X}G$, and $c^* \leq f(\X)^T G f(\X)$. Then, it holds that, for every $x \in X$,
    \begin{align}
        |W_xf(\X) - f(x)| \leq \sigma_{D}(x)\sqrt{B^2 - c^*}.
    \end{align} 
\end{lemma}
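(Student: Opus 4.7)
The plan is to recognize $W_x f(\X)$ as the minimum-norm interpolant in a suitably augmented Hilbert space and then derive the bound via orthogonal projection and Cauchy--Schwarz. First, introduce the auxiliary space $\tilde{\mathcal{H}} := \mathcal{H}_\kappa \oplus \reals^m$ equipped with the inner product $\langle (f, u), (g, w)\rangle_{\tilde{\mathcal{H}}} := \langle f, g\rangle_\kappa + \sigma_n^2 u^T w$, and distinguish the elements $\tilde{\phi}_i := (\kappa(\cdot, x_i), e_i)$ for each training input and $\tilde{\phi}_x := (\kappa(\cdot, x), 0)$ for the query point. A direct computation gives $\langle \tilde{\phi}_i, \tilde{\phi}_j\rangle_{\tilde{\mathcal{H}}} = \kappa(x_i, x_j) + \sigma_n^2 \delta_{ij}$, so the Gram matrix of $\{\tilde{\phi}_i\}_{i=1}^m$ is $K_{\X,\X} + \sigma_n^2 I = G^{-1}$, while $\langle \tilde{\phi}_x, \tilde{\phi}_i\rangle_{\tilde{\mathcal{H}}} = \kappa(x, x_i)$ and $\|\tilde{\phi}_x\|_{\tilde{\mathcal{H}}}^2 = \kappa(x, x)$.

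Next, embed the target $f$ into $\tilde{\mathcal{H}}$ as $F := (f, 0)$. This embedding preserves the RKHS norm, $\|F\|_{\tilde{\mathcal{H}}}^2 = \|f\|_\kappa^2 \leq B^2$, and preserves point evaluations, $\langle F, \tilde{\phi}_i\rangle_{\tilde{\mathcal{H}}} = f(x_i)$ and $\langle F, \tilde{\phi}_x\rangle_{\tilde{\mathcal{H}}} = f(x)$. Let $S := \mathrm{span}\{\tilde{\phi}_i\}_{i=1}^m$ and let $F^*$ be the orthogonal projection of $F$ onto $S$. Solving the resulting normal equations yields $F^* = \sum_j \alpha_j \tilde{\phi}_j$ with $\alpha = G f(\X)$, so evaluation at $x$ gives $\langle F^*, \tilde{\phi}_x\rangle_{\tilde{\mathcal{H}}} = K_{x, \X} G f(\X) = W_x f(\X)$, while $\|F^*\|_{\tilde{\mathcal{H}}}^2 = f(\X)^T G f(\X) \geq c^*$.

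The conclusion then follows from Pythagoras and Cauchy--Schwarz. Since $F - F^* \perp S$,
\begin{align*}
\|F - F^*\|_{\tilde{\mathcal{H}}}^2 = \|F\|_{\tilde{\mathcal{H}}}^2 - \|F^*\|_{\tilde{\mathcal{H}}}^2 \leq B^2 - c^*.
\end{align*}
Because $F - F^*$ is orthogonal to $S$, only the component of $\tilde{\phi}_x$ orthogonal to $S$ contributes to $\langle F - F^*, \tilde{\phi}_x\rangle_{\tilde{\mathcal{H}}}$, and a brief computation (using $G(K_{\X,\X} + \sigma_n^2 I)G = G$) shows its squared norm equals $\kappa(x, x) - K_{x, \X} G K_{\X, x} = \sigma_D^2(x)$. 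Cauchy--Schwarz then gives
\begin{align*}
|W_x f(\X) - f(x)| = |\langle F - F^*, \tilde{\phi}_x\rangle_{\tilde{\mathcal{H}}}| \leq \sigma_D(x)\sqrt{B^2 - c^*}.
\end{align*}

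The main technical obstacle is the structural identification between the augmented Hilbert space and the GP ridge formula, specifically verifying that the $\sigma_n^2$ ridge term is absorbed into the $\reals^m$ component of the inner product in exactly the way that makes $\tilde{K}_{\X,\X}^{-1} = G$ and the $\tilde{\mathcal{H}}$-projection evaluated at $x$ coincide with $W_x f(\X)$. Once this correspondence is in place, the bound collapses to the classical reproducing-kernel interpolation estimate, which is precisely the mechanism used in the proof of Lemma~\ref{lemma:2} in \cite{hashimoto2022learning}, now specialized to the noiseless observation vector $f(\X)$ so that the additive noise term disappears.
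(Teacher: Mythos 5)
Your proof is correct, and it reaches the bound by a route that is presentationally different from, though ultimately isomorphic to, the paper's. The paper works directly on $\mathcal{H}_\kappa$: it defines the modified bilinear form $\{f,g\} := \langle f,g\rangle_\kappa - f(\X)^T G\, g(\X)$, asserts it is positive semidefinite, observes that $\{f,\kappa(x,\cdot)\} = f(x) - W_x f(\X)$ and $\{\kappa(x,\cdot),\kappa(x,\cdot)\} = \sigma_D^2(x)$, and concludes in one step by Cauchy--Schwarz. You instead realize everything inside the augmented space $\tilde{\mathcal{H}} = \mathcal{H}_\kappa \oplus \reals^m$ with the ridge term absorbed into the $\reals^m$ component (the same device used by \citet{Chowdhury} to reduce regularized regression to noiseless interpolation with the inflated kernel $\kappa + \sigma_n^2\delta_{ij}$), project onto $S = \mathrm{span}\{\tilde{\phi}_i\}$, and finish with Pythagoras plus Cauchy--Schwarz on the residuals; all of your intermediate computations (normal equations giving $\alpha = Gf(\X)$, $\|F^*\|_{\tilde{\mathcal{H}}}^2 = f(\X)^TGf(\X)$, and $\|(I-P_S)\tilde{\phi}_x\|_{\tilde{\mathcal{H}}}^2 = \kappa(x,x) - K_{x,\X}GK_{\X,x} = \sigma_D^2(x)$ via $GG^{-1}G=G$) check out. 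The two arguments are linked by the identity $\bigl\langle (f,0)-P_S(f,0),\,(g,0)-P_S(g,0)\bigr\rangle_{\tilde{\mathcal{H}}} = \langle f,g\rangle_\kappa - f(\X)^T G\, g(\X)$, i.e., the paper's bracket is exactly the residual inner product in your construction. What your version buys: the positive semidefiniteness of the bracket comes for free, since residual norms in an honest inner-product space are automatically nonnegative --- whereas the paper must verify $f(\X)^TGf(\X) \le \|f\|_\kappa^2$ separately, and its stated justification is slightly garbled (it writes $f(\X)^TK_{\X,\X}f(\X) \le \langle f,f\rangle_\kappa$ where the relevant fact is $f(\X)^TK_{\X,\X}^{-1}f(\X) \le \|f\|_\kappa^2$ together with $G \preceq K_{\X,\X}^{-1}$); your construction also makes transparent why $\sigma_D^2(x)$ appears, namely as the squared distance from $\tilde{\phi}_x$ to the data span. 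What the paper's version buys is brevity: three display lines with no auxiliary space to set up.
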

The proof of Lemma \ref{lemma:3} can be found in Appendix \ref{App:Lemma_proof}.
In Theorem~\ref{my_prob_theorem}, we rely on Hoeffding's Inequality~\cite{mohri2018foundations} to bound the second term in Eqn~\eqref{Eqn:InequalityProof}, which in turn provides a probabilistic bound on the GP regression error when combined with Lemma \ref{lemma:3}.

\begin{theorem}
    [Bounded Support Probabilistic RKHS Error]
    Let $X$, $\kappa$, $B$, $D$, $G$, $W_x$, and $c^*$ be as in Lemma~\ref{lemma:3}, and define
    $\lambda_x = 4\sigma_v^2K_{x,\X}G^2 K_{\X,x}.$
    If noise $\pv \in V$ is zero-mean and has a finite support $\sigma_v$ (i.e., $|\pv| \leq \sigma_v$) and $\mu_{D}$ and $\sigma_{D}$ are obtained via Eqns~\eqref{mean_pred}-\eqref{covar_pred} on dataset $D$ with any choice of $\sigma_n >0$, then it holds that, for every $ x \in X$ and $\delta \in (0, 1]$,
    \begin{align}
        \mathbb{P}\Big(|\mu_{D}(x) - f(x)| \leq \epsilon(x,\delta)\Big) \geq 1 - \delta,
    \end{align}
    where $\epsilon(x,\delta) = \sigma_{D}(x)\sqrt{B^2 - c^*} + \sqrt{\frac{\lambda_x}{2}\text{ln}\frac{2}{\delta}}$.
    \label{my_prob_theorem}
\end{theorem}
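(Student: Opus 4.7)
The plan is to combine the decomposition already recorded in Eqn~\eqref{Eqn:InequalityProof} with Lemma~\ref{lemma:3} for the deterministic part and with Hoeffding's inequality for the stochastic part, then match the resulting tail probability to $\delta$. Since Eqn~\eqref{Eqn:InequalityProof} gives
\[
|\mu_D(x) - f(x)| \leq |W_x f(\X) - f(x)| + |W_x \V|,
\]
Lemma~\ref{lemma:3} handles the first summand deterministically by $\sigma_D(x)\sqrt{B^2 - c^*}$. The entire remaining task is to show that, with probability at least $1 - \delta$, $|W_x \V| \leq \sqrt{(\lambda_x/2)\ln(2/\delta)}$.

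To control $W_x \V$, I would write it as a scalar sum $W_x \V = \sum_{i=1}^m (W_x)_i\, v_i$, where $(W_x)_i$ denotes the $i$-th entry of the row vector $W_x = K_{x,\X} G$. By the modeling assumption, the $v_i$ are independent, zero mean, and supported on $[-\sigma_v, \sigma_v]$, so each term $(W_x)_i\, v_i$ is an independent zero-mean random variable taking values in $[-|(W_x)_i|\sigma_v,\, |(W_x)_i|\sigma_v]$, an interval of length $2|(W_x)_i|\sigma_v$. Hoeffding's inequality then yields
\[
\mathbb{P}\bigl(|W_x \V| \geq t\bigr) \leq 2\exp\!\left(-\frac{2 t^2}{\sum_{i=1}^m \bigl(2|(W_x)_i|\sigma_v\bigr)^2}\right) = 2\exp\!\left(-\frac{t^2}{2\sigma_v^2 \|W_x\|_2^2}\right).
\]
Using symmetry of $G$ (since $K_{\X,\X}+\sigma_n^2 I$ is symmetric), I rewrite $\|W_x\|_2^2 = W_x W_x^T = K_{x,\X} G G^T K_{\X,x} = K_{x,\X} G^2 K_{\X,x}$, so the denominator in the exponent is exactly $\lambda_x/2$.

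Setting the tail bound equal to $\delta$, i.e., $2\exp(-t^2/\lambda_x) = \delta$, gives $t = \sqrt{(\lambda_x/2)\ln(2/\delta)}$, which is precisely the second term of $\epsilon(x,\delta)$. Adding this stochastic bound to the deterministic bound from Lemma~\ref{lemma:3} via the triangle inequality in Eqn~\eqref{Eqn:InequalityProof} delivers the claim. The only step that is not entirely mechanical is the Hoeffding application: one must identify the independent summands, verify that the constants $(W_x)_i\sigma_v$ are the correct per-coordinate support bounds, and then recognize $\sum_i (W_x)_i^2$ as the quadratic form $K_{x,\X} G^2 K_{\X,x}$, which is where symmetry of $G$ is used. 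Beyond that, matching the exponent to $\delta$ to produce the advertised $\sqrt{(\lambda_x/2)\ln(2/\delta)}$ is a straightforward algebraic inversion.
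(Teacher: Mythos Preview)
Your proposal is correct and follows essentially the same route as the paper: decompose via Eqn~\eqref{Eqn:InequalityProof}, bound the noise-free term by Lemma~\ref{lemma:3}, and apply Hoeffding's inequality to $W_x\V=\sum_i (W_x)_i v_i$ with per-term ranges $2|(W_x)_i|\sigma_v$, then identify $\sum_i(2\sigma_v(W_x)_i)^2=4\sigma_v^2 K_{x,\X}G^2K_{\X,x}=\lambda_x$ and invert to get $t=\sqrt{(\lambda_x/2)\ln(2/\delta)}$. The only slip is the intermediate line ``$2\exp(-t^2/\lambda_x)=\delta$'': since you correctly noted the denominator in the exponent is $\lambda_x/2$, this should read $2\exp(-2t^2/\lambda_x)=\delta$, which is exactly what yields your stated $t$.
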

The proof uses Lemma~\ref{lemma:3} for the first term of $\epsilon(x,\delta)$ and derives
the second term by applying Hoeffding's Inequality to $|W_x \V|$ by noting that $\expect[W_x \V] = 0$ and each $-\sigma_v \leq v_i \leq +\sigma_v$, enabling the random variable to maintain bounded support on each term. The full proof can be found in Appendix \ref{App:theorem_proof}.
Note that Theorem \ref{my_prob_theorem} only requires two parameters in its probabilistic bound: $c^*$ and $B$.
In fact, $c^*$ can be found  by solving the following quadratic optimization problem, where we rely on the boundedness of the support of the distribution of $\mathcal{V} = [v_1, \ldots, v_m]^T$:
\begin{align}
  c^* =  \min_{-\sigma_v \leq v_i \leq \sigma_v, i=1,...,m} (Y - \mathcal{V})^T G (Y - \mathcal{V}).
\end{align}
The other parameter, 
$B$, can be formally bounded by the technique introduced in \cite{jackson2021formal}. This is in contrast with Lemma \ref{lemma:1}, which also requires an approximation on the information gain of the kernel. We should also emphasize that Theorem \ref{my_prob_theorem} allows $\sigma_n$ to remain as a decision variable, enabling an optimization over $\sigma_n$ that can further minimize the error bounds as compared to Lemmas \ref{lemma:1} and \ref{lemma:2}. 

We also note as $m\rightarrow\infty$ then $\lambda_x$ tends toward 0 which implies that we can set $\delta$ arbitrarily close to 0, reducing the error bound to the result of Lemma~\ref{lemma:3} which decreases with samples as $\sigma_D(x)$ decreases and $c^*$ increases. This demonstrates an $O(\sqrt{m})$ improvement over the results of Lemma~\ref{lemma:2}. A detailed discussion is provided in Appendix \ref{app:disscuss}.

We extend our point-wise probabilistic error bounds to a uniform bound with the following corollary.
\begin{corollary} [Uniform Error Bounds]
For a given compact set $X' \subseteq X$,
    \begin{align}
    \mathbb{P}\Big(|\mu_{D}(x) - f(x)| \leq \overline{\epsilon}_{X'}(\delta)\Big) \geq 1 - \delta
\end{align}
$\forall x \in X'$, where $\overline{\epsilon}_{X'}(\delta) = \sup_{x\in X'}{\epsilon}(\delta,x)$. \label{cor:uniform_bound}
\end{corollary}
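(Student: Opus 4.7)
The plan is to deduce the uniform-constant bound directly from the pointwise statement in Theorem~\ref{my_prob_theorem} using monotonicity of probability. First, I would fix an arbitrary $x \in X'$ and apply Theorem~\ref{my_prob_theorem} at that point to obtain $\mathbb{P}(|\mu_{D}(x) - f(x)| \leq \epsilon(x,\delta)) \geq 1 - \delta$. By the definition of supremum, $\epsilon(x,\delta) \leq \overline{\epsilon}_{X'}(\delta)$ for every $x \in X'$, so the event $\{|\mu_{D}(x) - f(x)| \leq \epsilon(x,\delta)\}$ is contained in the event $\{|\mu_{D}(x) - f(x)| \leq \overline{\epsilon}_{X'}(\delta)\}$. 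Monotonicity of probability then yields the claim at this $x$, and since $x \in X'$ was arbitrary, it holds for every such $x$.

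The only technical point to verify is that $\overline{\epsilon}_{X'}(\delta)$ is finite, so the bound is non-vacuous. For this I would invoke the extreme value theorem: $X'$ is compact by hypothesis, and the map $x \mapsto \epsilon(x,\delta) = \sigma_{D}(x)\sqrt{B^2 - c^*} + \sqrt{(\lambda_x/2)\ln(2/\delta)}$ is continuous in $x$ because both $\sigma_{D}(x)$ (from Equation~\eqref{covar_pred}) and $\lambda_x = 4\sigma_v^2 K_{x,\X} G^2 K_{\X,x}$ depend continuously on $x$ through the continuously differentiable kernel $\kappa$ of Assumption~\ref{assumption:1}. Hence the supremum is attained on $X'$.

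I do not expect any substantive obstacle: the corollary is essentially a repackaging of Theorem~\ref{my_prob_theorem} in which the per-point threshold $\epsilon(x,\delta)$ is replaced by its worst-case value over $X'$, and all of the heavy lifting (Lemma~\ref{lemma:3} together with Hoeffding's inequality on $|W_x\V|$) has already been carried out in the proof of that theorem. A strictly stronger \emph{simultaneous} guarantee of the form $\mathbb{P}(\forall x \in X': |\mu_{D}(x) - f(x)| \leq \overline{\epsilon}_{X'}(\delta)) \geq 1 - \delta$ would instead require additional machinery such as a union bound over a finite $\varepsilon$-cover of $X'$ paired with a Lipschitz estimate on $\mu_D - f$, but the pointwise statement as written calls for none of this.
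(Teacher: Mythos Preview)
Your proposal is correct and follows essentially the same argument as the paper: apply Theorem~\ref{my_prob_theorem} at a fixed $x\in X'$, then use $\epsilon(x,\delta)\le \overline{\epsilon}_{X'}(\delta)$ (definition of supremum) together with monotonicity of probability. Your additional finiteness check via the extreme value theorem and the closing remark distinguishing the pointwise statement from a simultaneous one are helpful elaborations that the paper's proof omits.
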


\subsection{Deterministic Error Bounds}
If error bounds with confidence one ($\delta=0$) are desired, Theorem \ref{my_prob_theorem} would require infinite samples. Here, we show how confidence one results can be derived with finite samples, producing an alternative deterministic error bound to the one in Lemma~\ref{lemma:2}.

\begin{theorem}
    [Bounded Support Deterministic RKHS Error]
    Let $X$, $B$, $D$, $G$, $c^*$ be as in Theorem~\ref{my_prob_theorem}, and $\Lambda_x = \sum_{i=1}^m \sigma_v|[K_{x,\X}G]_i|$. 
    If noise $\pv \in V$ has a finite support $\sigma_v$ (i.e., $|\pv| \leq \sigma_v$) and $\mu_{D}$ and $\sigma_{D}$ are obtained via Eqns~\eqref{mean_pred}-\eqref{covar_pred} on dataset $D$ with any choice of $\sigma_n >0$, then it holds that, for every $ x \in X$,
    \begin{align}
        |\mu_{D}(x) - f(x)| \leq \epsilon_d(x)
    \end{align}
    where $\epsilon_d(x) =\sigma_{D}(x)\sqrt{B^2 - c^*} + \Lambda_x$.
    \label{my_det_theorem}
\end{theorem}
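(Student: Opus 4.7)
The plan is to follow the exact same decomposition as in Theorem~\ref{my_prob_theorem}, but to handle the noise term $|W_x \V|$ using a worst-case bound instead of Hoeffding's inequality. Starting from Equation~\eqref{Eqn:InequalityProof}, the regression error splits as
\begin{align}
    |\mu_{D}(x) - f(x)| \leq |W_x f(\X) - f(x)| + |W_x \V|,
\end{align}
so the two terms can be bounded independently and then summed.

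For the first, noise-free term $|W_x f(\X) - f(x)|$, I would invoke Lemma~\ref{lemma:3} verbatim (that lemma does not depend on anything probabilistic about $\V$), which gives the bound $\sigma_{D}(x)\sqrt{B^2 - c^*}$, matching the first summand in $\epsilon_d(x)$. This is exactly the same first-term treatment as in the probabilistic theorem, so no new work is required here.

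For the second term, I would just apply the triangle inequality together with the support assumption on the noise. Writing $W_x \V = \sum_{i=1}^m [K_{x,\X}G]_i\, v_i$, we have
\begin{align}
    |W_x \V| \leq \sum_{i=1}^m \bigl|[K_{x,\X}G]_i\bigr| \cdot |v_i| \leq \sigma_v \sum_{i=1}^m \bigl|[K_{x,\X}G]_i\bigr| = \Lambda_x,
\end{align}
where the second inequality uses $|v_i| \leq \sigma_v$ by hypothesis. Combining the two bounds yields the claim.

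The argument is entirely straightforward and there is no real obstacle: the only subtlety is recognizing that the deterministic tightening of Hoeffding's inequality (which is used in Theorem~\ref{my_prob_theorem} to handle the signed sum $\sum_i [W_x]_i v_i$) is just the elementary $\ell_1/\ell_\infty$ triangle inequality applied with $\|v\|_\infty \leq \sigma_v$. The bound does not require $\expect[\pv]=0$ (unlike the Hoeffding-based probabilistic version, which benefits from zero-mean to center the sum), and it holds for any $\sigma_n > 0$ because neither Lemma~\ref{lemma:3} nor the triangle-inequality step for $|W_x \V|$ constrains $\sigma_n$.
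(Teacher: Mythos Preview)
Your proposal is correct and follows essentially the same approach as the paper: split via Equation~\eqref{Eqn:InequalityProof}, apply Lemma~\ref{lemma:3} to the noise-free term, and bound $|W_x\V|$ by its worst case $\sigma_v\sum_i|[W_x]_i|=\Lambda_x$. The paper phrases the second step as explicitly choosing the maximizing noise $v_i=\mathrm{sign}([W_x]_i)\sigma_v$, whereas you use the equivalent $\ell_1/\ell_\infty$ triangle inequality; these are the same argument.
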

\begin{proof}
    Consider the term $|W_x \V|$ in Eqn~\eqref{Eqn:InequalityProof}.
    The noise distribution that maximizes $|W_x \V|$ is found by setting $v_i = \mathrm{sign}([W_x]_i)\sigma_v$, where $\mathrm{sign}([W_x]_i) = 1$ if the $i$-th term of $W_x$ is $\geq 0$ and 0 otherwise. Using this bound and Lemma~\ref{lemma:3} on the RHS of Eqn~\eqref{Eqn:InequalityProof}, we conclude the proof.
\end{proof}

\begin{remark}
    We stress that while Lemmas \ref{lemma:1} and \ref{lemma:2} define strict requirements on the value of $\sigma_n$ used in the posterior prediction, i.e., $1 + 2/m$ and $\sigma_v$ respectively, both Theorems \ref{my_prob_theorem} and \ref{my_det_theorem} allow for any value to be used for $\sigma_n$. In particular, $\sigma_n \ll \sigma_v$ is a valid selection which can reduce the predictive variance, enabling much tighter deterministic and probabilistic bounds. Similarly, when $\sigma_v$ is very small, we can choose $\sigma_n > \sigma_v$ to avoid numerical instabilities in inverting $(K_{\X,\X} + \sigma_n^2I)$. 
\end{remark}

\begin{figure*}[ht]
    \centering
    \begin{subfigure}[c]{0.242\textwidth}
        \includegraphics[width=\textwidth]{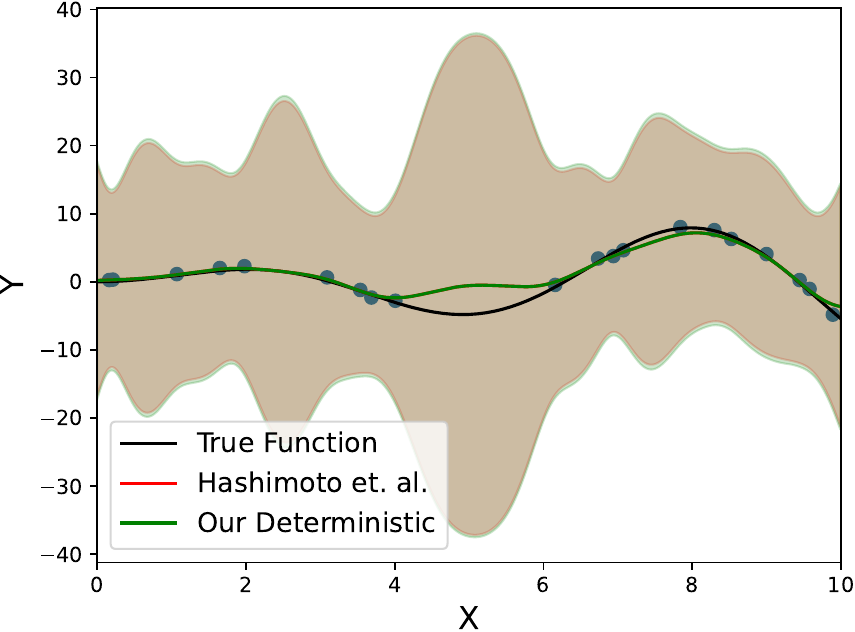}
        \caption{$\sigma_n = \sigma_v$} \label{fig:1d_GP_vary_a}
    \end{subfigure}
    \begin{subfigure}[c]{0.242\textwidth}
        \includegraphics[width=\textwidth]{figures/Det_fnc_bounds_5.0-crop.pdf}
        \caption{$\sigma_n = \sigma_v/5$} \label{fig:1d_GP_vary_b}
    \end{subfigure}
    \begin{subfigure}[c]{0.242\textwidth}
        \includegraphics[width=\textwidth]{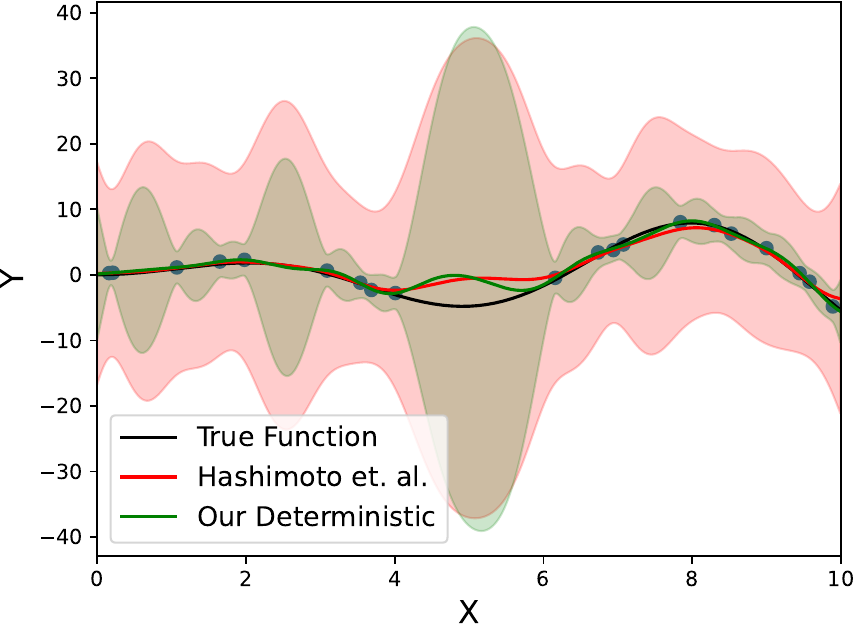}
        \caption{$\sigma_n = \sigma_v/10$} \label{fig:1d_GP_vary_c}
    \end{subfigure}
    \begin{subfigure}[c]{0.242\textwidth}
        \includegraphics[width=\textwidth]{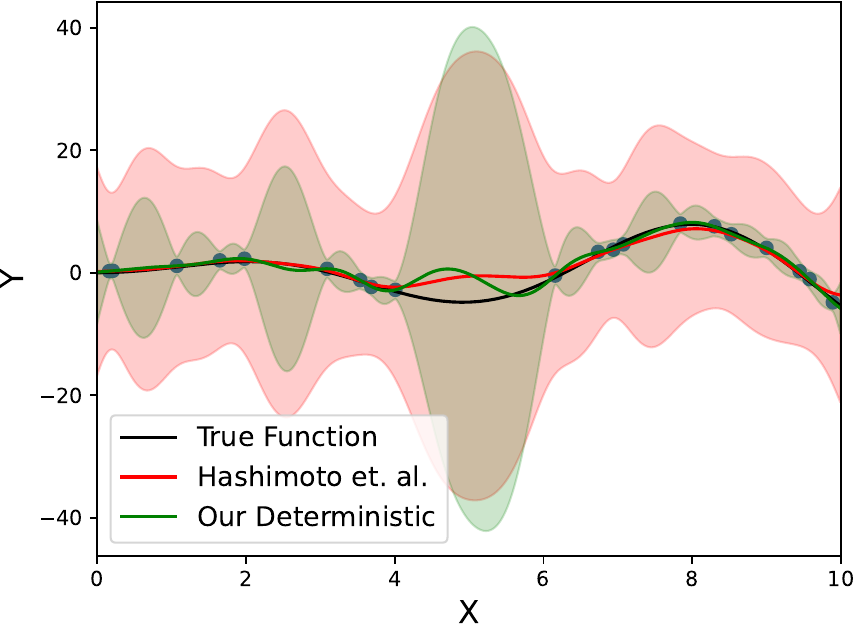}
        \caption{$\sigma_n = \sigma_v/20$} \label{fig:1d_GP_vary_v}
    \end{subfigure}
    \caption{
    Predictive mean and deterministic error bounds when learning from 20 samples of $y = x\sin(x) + v$ with $|v| < 0.5$ as $\sigma_n$ varies. We plot the mean and bounds from Lemma \eqref{lemma:2} in red, with our mean and bounds in green. In all cases our bounds remain valid, but demonstrate optimal performance when $\sigma_n \in [\sigma_v/10, \sigma_v/5]$.}
    \label{fig:1d_gp_fnc_var_sigman}
\end{figure*}

We demonstrate the trend of the error bounds as we adjust $\sigma_n$ used for the posterior predictions of a GP in Figure~\ref{fig:1d_gp_fnc_var_sigman}. It is immediately clear that using a smaller $\sigma_n$ when there is more data results in tight error bounds across the domain. 

\begin{remark}
    In this paper, we consider the offline setting, where we are given a set of i.i.d. data.  In this setting, the bounds proposed in \cite{abbasi2013online, Seeger, Chowdhury} are still valid. To extend to online setting, our bounds can be updated as data is gathered, and the noise must remain independent of the filtration, which is typical in robotics applications.
\end{remark}

\subsection{Extension to Deep Kernel Learning.}
Deep Kernel Learning (DKL) is an extension of GPR~\cite{Ober2021}.
In DKL, we compose a base kernel, e.g., the squared-exponential $\kappa_{se}= \sigma_s \exp\left(-\|x-x'\| / 2l^2 \right)$, with a neural network as $\kappa_{DKL}(x, x') =\kappa_{se}(\psi_w(x), \psi_w(x'))$, where $\psi_w: \reals^d \rightarrow \reals^s$ is a neural network parameterized by weights $w$. Then posterior predictions still use analytical Eqns~\eqref{mean_pred}-\eqref{covar_pred}, but the kernel now includes significantly more parameters which has been shown to significantly improve the representational power of GPs without needing more data in the kernel~\cite{reed2023promises}. 

\begin{remark}
    \label{DKL_validity}
    For DKL to satisfy Assumption~\ref{assumption:1}, the kernel must remain continuously differentiable and positive semi-definite. Using the $\mathrm{GeLU}$ or $\mathrm{Tanh}$ activation functions and learning $\psi_w(x)$ as a model of $f(x)$ using stochastic mini-batching can prove sufficient. 
    Then, under the assumption that $\psi_w$ is well-behaved 
    over a compact set $X$, the RKHS norm $\|f\|_{\kappa_{DKL}} \leq \|f\|_\kappa$ is feasible. This can be inferred directly from \cite[Proposition 2]{jackson2021formal}, as DKL tends to correlate data more effectively over space, it is reasonable to expect that $\inf_{x,x'\in X}\kappa_{DKL}(x,x') \geq \inf_{x,x'\in X}\kappa(x,x')$. 
\end{remark}

DKL can reduce the posterior variance of a GP and use the same RKHS norms as the base kernel, i.e., decreasing $\sigma_D$ without increasing bound $B$.
Therefore, DKL can enable tighter GP regression error bounds with the same number of predictive samples than standard GP.
Similarly, in the event that many samples are available, network $\psi_w$ can be pre-trained over a large set of samples and the kernel can use a small subset of the data for posterior predictions, enabling computationally efficient calculations with increased accuracy. %\LL{Extend this? }
Furthermore, compared to Lemma~\ref{lemma:1}, our error bounds utilize significantly more information about the kernel. This allows an informed kernel, such as DKL, to further reduce the bound beyond just the value of $\sigma_D$.

\section{Application to Safety of Control Systems}
\label{sec:applications}
The bounds we derive in Theorems \ref{my_prob_theorem} and \ref{my_det_theorem} can be particularly important to provide safety guarantees for dynamical systems learned from data. 
Specifically, consider the following model of a discrete-time stochastic system with additive noise
\begin{align}
\label{eqn:SysDynamics}
    \px(k+1) = f(\px(k)) + \pv \qquad k\in  \mathbb{N},
\end{align}
where $\px \in \reals^d$ is the state,  $\pv \in V \subset \reals^d$ is a random variable independent and identically distributed at each time  step representing an additive noise term, and $f:\mathbb{R}^d \to \mathbb{R}^d$ is an unknown, possibly non-linear, function (vector field). Intuitively, $\px(k)$ represents a general model of a stochastic system taking values in $\mathbb{R}^d$.

A key challenge in many applications is to guarantee that System \eqref{eqn:SysDynamics} will stay within a given safe set $X_s \subset \reals^d$, i.e., it avoids obstacles, for a given time horizon $[0,N]$. A common technique to obtain such guarantees is to rely on stochastic barrier functions \cite{kushner1967stochastic, santoyo2021barrier}, which represents a generalization of Lyapunov functions to provide set invariance. The intuition behind barriers is to build a function $\mathcal{B}:\mathbb{R}^d \to \mathbb{R}$ that when composed with System \eqref{eqn:SysDynamics} forms a supermartingale and consequently, martingale inequalities can be used to study the systems' evolution, as shown in the following Proposition.  
\begin{proposition}[ \cite{mazouz2022safety}, Section 3.1]
\label{prop:barrier}
Given an initial set $X_0$, a safe set $X_s$, and an unsafe set $X_u = \reals^d \setminus X_s $, a twice differentiable function $\mathcal{B}:\reals^d \rightarrow \reals$ is a barrier function if the following conditions hold for  $\beta,\eta \in [0,1]$:
$\mathcal{B}(x) \geq 0$ for all $x\in \reals^d$, $\mathcal{B}(x) \leq \eta$ for all $x\in X_0$, $\mathcal{B}(x) \geq 1$ for all $\forall x\in X_u$, and 
\begin{align}
    \expect[\mathcal{B}(f(x) + \pv)\mid x] \leq \mathcal{B}(x) + \beta \quad \forall x\in X_s. 
    \label{eq: barrier martingale}
\end{align}
Then, it holds that $$P\big(\forall k \in [0,N], \px(k)\in X_s\mid \px(0)\in X_0 \big) \geq 1 - (\beta N + \eta).$$
\end{proposition}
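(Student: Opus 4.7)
The plan is a classical stopping-time / supermartingale argument adapted to the bounded horizon $[0,N]$. First I would introduce the exit time $\tau = \inf\{k \geq 0 : \px(k) \notin X_s\}$, which is a stopping time with respect to the natural filtration $\F_k = \sigma(\px(0), \ldots, \px(k))$. Since $X_u = \reals^d \setminus X_s$, on the event $\{\tau \leq N\}$ the state $\px(\tau)$ lies in $X_u$, and hence $\mathcal{B}(\px(\tau)) \geq 1$ by the unsafe-set condition of the proposition. Showing $\mathbb{P}(\tau \leq N) \leq \beta N + \eta$ is therefore equivalent to the claim, via taking complements.

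Next, I would argue that the stopped, shifted process $M_k := \mathcal{B}(\px(k \wedge \tau)) - \beta\,(k \wedge \tau)$ is a supermartingale with respect to $\F_k$. On $\{\tau > k\}$ the state $\px(k)$ lies in $X_s$, so Equation \eqref{eq: barrier martingale}, combined with the i.i.d.\ independence of $\pv$ from $\F_k$, gives
\begin{align*}
\expect[M_{k+1}\mid\F_k] &= \expect[\mathcal{B}(f(\px(k)) + \pv)\mid \px(k)] - \beta(k+1) \\
&\leq \mathcal{B}(\px(k)) + \beta - \beta(k+1) = M_k,
\end{align*}
while on $\{\tau \leq k\}$ the process is frozen and the inequality is trivial. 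Iterating this bound (equivalently, invoking optional stopping at the bounded time $N$) yields $\expect[\mathcal{B}(\px(N \wedge \tau))] \leq \expect[M_N] + \beta N \leq \mathcal{B}(\px(0)) + \beta N \leq \eta + \beta N$, where the last step uses the initial-set condition $\mathcal{B}(\px(0)) \leq \eta$ valid for $\px(0)\in X_0$.

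Finally, since $\mathcal{B} \geq 0$ on all of $\reals^d$, Markov's inequality gives $\mathbb{P}(\mathcal{B}(\px(N \wedge \tau)) \geq 1) \leq \eta + \beta N$, and the inclusion $\{\tau \leq N\} \subseteq \{\mathcal{B}(\px(N\wedge\tau)) \geq 1\}$ (from the first paragraph) converts this into $\mathbb{P}(\tau \leq N) \leq \eta + \beta N$, delivering the claimed safety bound. The main obstacle is a bookkeeping issue rather than a conceptual one: establishing cleanly that $M_k$ is a supermartingale on the whole probability space by treating $\{\tau > k\}$ and $\{\tau \leq k\}$ uniformly, and justifying that the conditional expectation in \eqref{eq: barrier martingale}, which is written as conditioning on the single state $x$, can be lifted to full $\F_k$-conditioning — this is where the i.i.d.\ assumption on $\pv$ at each time step is essential. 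Once that measurability bookkeeping is in place, each remaining step (optional stopping at a bounded horizon, Markov's inequality via nonnegativity of $\mathcal{B}$) is entirely routine.
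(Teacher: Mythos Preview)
Your argument is correct and is precisely the classical Kushner-type supermartingale proof of stochastic barrier certificates: stop the process at the first exit time, use the drift condition to make the shifted process a supermartingale, apply optional stopping at the bounded horizon, and finish with Markov's inequality via the nonnegativity of $\mathcal{B}$ and the threshold condition on $X_u$. Note, however, that the paper does not supply its own proof of Proposition~\ref{prop:barrier}; it is quoted as a known result from \cite{mazouz2022safety} (and ultimately traces back to \cite{kushner1967stochastic}), so there is no ``paper's proof'' to compare against---your write-up is exactly the standard derivation that underlies the cited result.
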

Hence, by finding an appropriate $\mathcal{B}$, safety of System~\eqref{eqn:SysDynamics} can be guaranteed.
However, how to construct such $\mathcal{B}$ when $f$ is unknown is still an open problem \cite{jagtap2020formal,wajid2022formal,mazouz2022safety,mazouz2024piecewise}. 
Here, we show that, under the  assumption that $f$  lies in the RKHS of $\kappa$ and $\|f^{(i)}\|_\kappa < B_i$, where $f^{(i)}$ denotes the $i$-th output of $f$, we can employ GPR along with Theorems~\ref{my_prob_theorem} and \ref{my_det_theorem} to construct $\mathcal{B}$.

In particular, by partitioning $X_s$ into a set of convex regions $Q = \{q_1,...,q_{|Q|}\}$ s.t. $X_s = \cup_{q\in Q} q$, it is sufficient to simply replace the condition in~\eqref{eq: barrier martingale} with the following constraints 
for each $q\in Q$:
\begin{subequations}
    \begin{align}
        &\expect[\mathcal{B}(z_q + \pv)\mid x \in q] \leq \mathcal{B}(x)+\beta  && \\
        &z^{(i)}_q \in [\mu_{D}^{(i)}(x) \pm \overline{\epsilon}_q^{(i)}(\delta)] && \forall x \in q,
    \end{align}
\end{subequations}
where $\mu_{D}^{(i)}$ denotes the mean prediction for $f^{(i)}$ and $\overline{\epsilon}_q^{(i)}(\delta)$ can be computed by combining Corollary~\ref{cor:uniform_bound} with either Theorem~\ref{my_prob_theorem} or~\ref{my_det_theorem}. Work \cite{mazouz2024piecewise} shows how to calculate a barrier with such constraints.
In the former case, the safety probability in Proposition \ref{prop:barrier}  holds with the confidence resulting from Theorem~\ref{my_prob_theorem}, while in the latter case results hold with confidence $1$. In Section~\ref{sec:experiment}, we consider both cases.

\section{Experiments}
\label{sec:experiment}
\begin{figure*}[h]
    \centering
    \begin{subfigure}[c]{0.23\textwidth}
        \includegraphics[width=\textwidth]{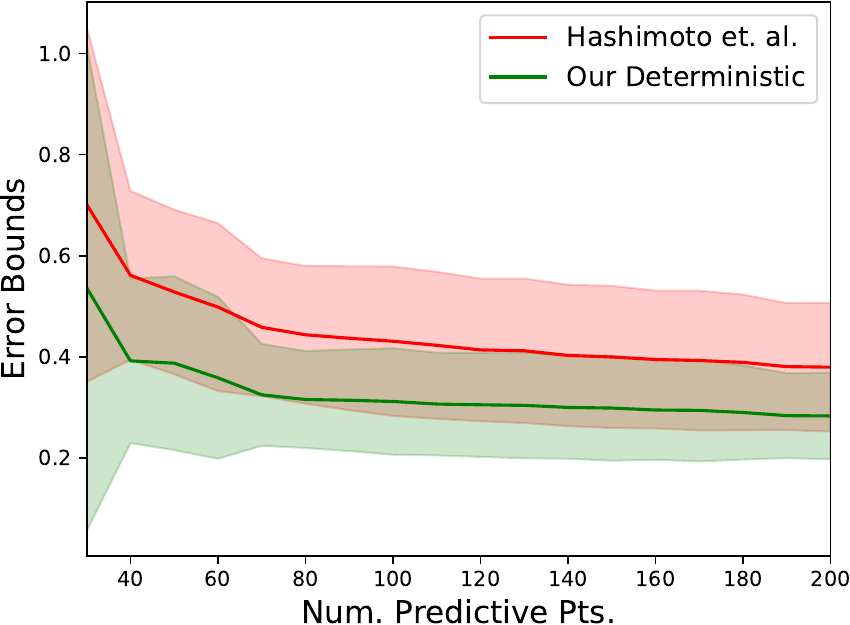}
        \caption{GP - det. error comparison} \label{fig:2d_GP_a}
    \end{subfigure}
    ~~
    \begin{subfigure}[c]{0.23\textwidth}
        \includegraphics[width=\textwidth]{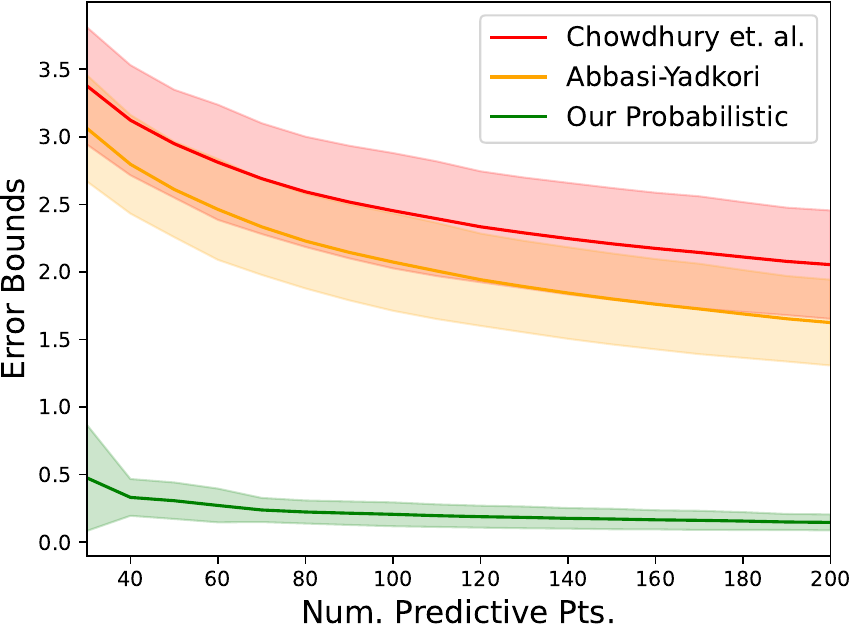}
        \caption{GP - prob. error comparison} \label{fig:2d_GP_b}
    \end{subfigure}
    ~~
    \begin{subfigure}[c]{0.23\textwidth}
        \includegraphics[width=\textwidth]{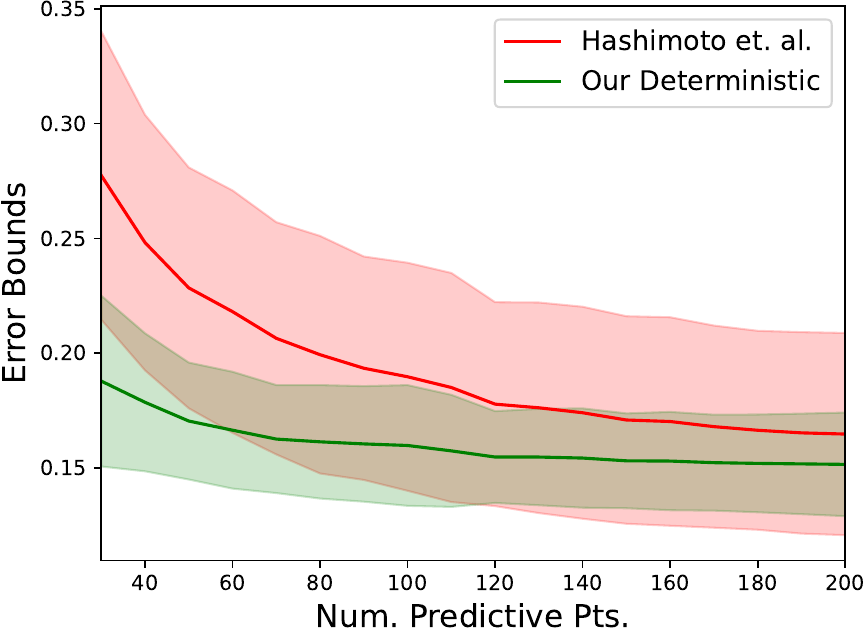}
        \caption{DKL - det. error comparison} \label{fig:2d_DKL_a}
    \end{subfigure}
    ~~
    \begin{subfigure}[c]{0.23\textwidth}
        \includegraphics[width=\textwidth]{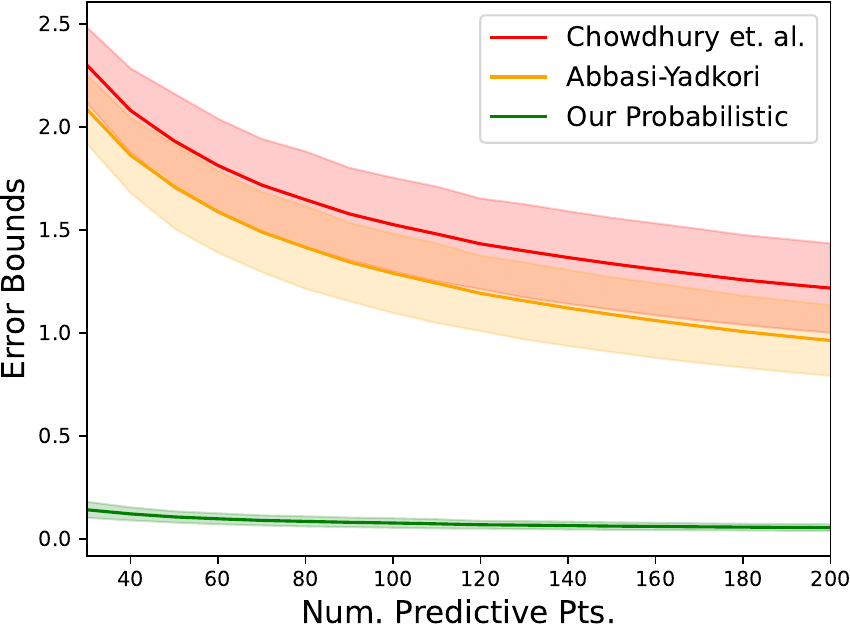}
        \caption{DKL - prob. error comparison} \label{fig:2d_DKL_b}
    \end{subfigure}
    \caption{
    Trends of error bounds with increasing data used for the posterior prediction for a 2D system when $|v|\leq 0.1$ using Left: the \emph{squared exponential kernel} and Right: \emph{DKL}. In (a) and (c) we compare our deterministic error bound (green) with results from Lemma~\ref{lemma:2} (red). In (b) and (d) we compare our probabilistic error bound (green) to results from Lemma~\ref{lemma:1} (red) and \citeauthor{abbasi2013online} (orange) with $\delta = 0.05$. In (c) and (f) we compare our deterministic and probabilistic bounds with $\delta\in [0.01, 0.5]$.}
    \label{fig:2d_gp_bounds}
\end{figure*}
\begin{figure}[!h]
    \centering
    \begin{subfigure}[c]{0.23\textwidth}
        \includegraphics[width=\textwidth]{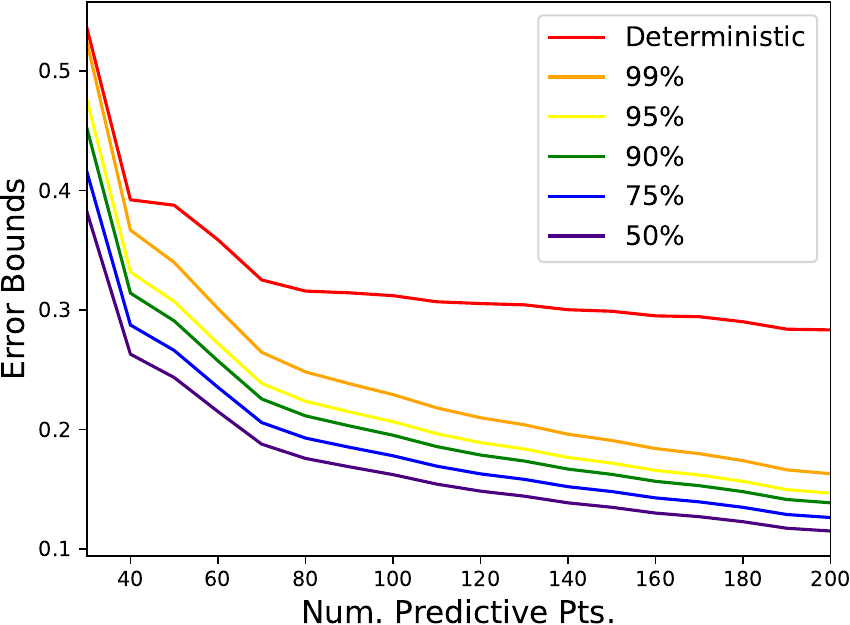}
        \caption{GP - vary $\delta$ comparison} \label{fig:2d_GP_c}
    \end{subfigure}
    \begin{subfigure}[c]{0.23\textwidth}
        \includegraphics[width=\textwidth]{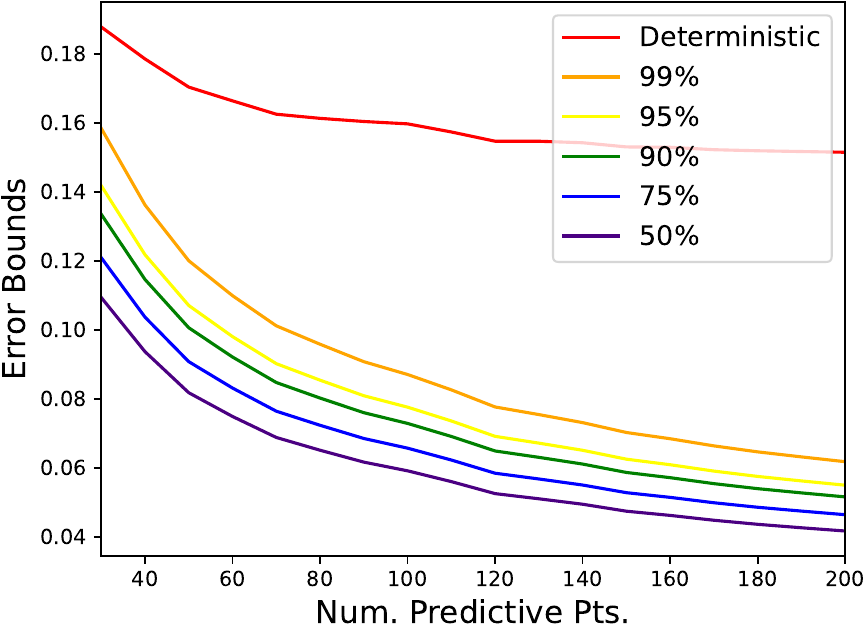}
        \caption{DKL - vary $\delta$ comparison} \label{fig:2d_DKL_c}
    \end{subfigure}
    \caption{
    Trends of our error bounds with increasing data used for the posterior prediction for a 2D system when $|v|\leq 0.1$ using (a) the \emph{SE kernel} and (b) \emph{DKL}. We compare our deterministic (Theorem \ref{my_det_theorem}) and probabilistic (Theorem \ref{my_prob_theorem}) bounds with $\delta\in [0.01, 0.5]$.}
    \label{fig:2d_gp_bounds_vary}
\end{figure}

In this section, we demonstrate the effectiveness of our bounds in comparison to the existing state-of-the-art bounds. We first provide a visual of the trends of our bounds as more data is utilized for posterior predictions, as well as demonstrating the improvements that DKL can provide over standard GPs. We then demonstrate how our bounds perform in comparison to the state-of-the-art across several dynamical systems of interest. Finally, we show how our bounds can be used in safety certification, through the use of stochastic barrier functions, as described in Section \ref{sec:applications}. We note that deep kernels in general are only positive semi-definite, which prohibits the use of techniques in \cite{maddalena2021deterministic} for generating a deterministic bound. Further detail on the models considered can be seen in Appendix \ref{App:norms_domains}.

\textbf{2D Visual Example}
We consider the case of GP regression for a linear system with $\px \in \reals^2$ and $|v| \leq 0.1$, defined as
\begin{align}
    \py = 0.4x_1 + 0.1x_2 + v.
\end{align}
We first consider a squared exponential kernel and using the methods proposed in \cite{hashimoto2022learning} we set $B = 10$. For our predictions, we set $\sigma_n = \sigma_v/5 = 0.02$. We illustrate the trend of the bounds as the number of data points in the kernel increases in Figures~\ref{fig:2d_GP_a}-\ref{fig:2d_GP_b}, showing the averaged mean bound and one standard deviation over $10^4$ test points.
We note that our deterministic bound is comparable or better than the existing approach and our probabilistic bound significantly outperforms existing bounds. 

Next, we consider the same scenario but with a DKL kernel. Results  are shown in Figures \ref{fig:2d_DKL_a}-\ref{fig:2d_DKL_b}. Here, we assume a fixed neural network that is pre-trained to model $f(x)$ from data. We consider a network with two hidden layers of 16 neurons each with the GeLU activation function. The network is trained over 1000 sample points through stochastic mini-batching. Figure~\ref{fig:2d_gp_bounds_vary} compares averages of our deterministic bound to our probabilistic bound. Interestingly, DKL generates probabilistic errors lower than the bound on sample noise, enabling highly accurate predictions even with very noisy samples. We note that, for each model, our probabilistic bound remain accurate for the desired confidence level.

\textbf{Examples for Multiple Dynamical Systems}
\begin{table*}[!ht]
    \centering
    \setlength{\tabcolsep}{1mm}
    \begin{tabular} { l l @{\hspace{3mm}} l @{\hspace{5mm}} | @{\hspace{5mm}} c  c @{\hspace{3mm}}  @{\hspace{3mm}} c  c @{\hspace{5mm}} | @{\hspace{5mm}} c c @{\hspace{3mm}}  @{\hspace{3mm}} c c @{\hspace{3mm}}  @{\hspace{3mm}} c c @{\hspace{3mm}}  @{\hspace{3mm}} cc }
        \toprule
        \multirow{2}{*}{System \;}& \multirow{2}{*}{$\kappa$\;} & \multirow{2}{*}{$\sigma_n$} &  \multicolumn{2}{l}{\!\!\underline{\> Our Det. \>}} & \multicolumn{2}{l}{\!\!\underline{Lem \ref{lemma:2} Det.} } & \multicolumn{2}{l}{\!\!\underline{ Our Prob. } } & \multicolumn{2}{l}{\!\!\underline{ AY Prob. }} & \multicolumn{2}{l}{\!\!\underline{Lem \ref{lemma:1} Prob.}} & \multicolumn{2}{l}{\!\!\underline{SKKS Prob.}} \\
          & & & true & $\|\epsilon\|_1$ & true & $\|\epsilon\|_1$  &  true & $\|\epsilon\|_1$ & true & $\|\epsilon\|_1$ & true & $\|\epsilon\|_1$ & true & $\|\epsilon\|_1$  \\
        \hline
        2D Lin & SE & $\sigma_v/5$ & $0.06$ & $0.86$ & $0.04$ & $1.45$ & $0.06$ & $0.57$ &  0.10 &  5.72 & $0.10$ & 7.53 & $0.04$ & $350$ \\
        2D Lin & SE & $\sigma_v/10$ & $0.07$ & $0.84$ & $0.04$ & $1.45$ & $0.07$ & $0.50$ & 0.10 & 5.72 & $0.10$ & 7.53 & $0.04$ & $350$\\
        2D Lin & DKL & $\sigma_v/5$ & $0.04$ & $0.36$ & $0.04$ & $0.45$ & $0.04$ & $0.15$ & 0.08 &  2.72 & $0.08$ & $3.66$ & $0.04$ & $141$\\
        2D Lin & DKL & $\sigma_v/10$ & $0.04$ & \textbf{0.33} & $0.04$ & $0.45$ & $0.04$ & \textbf{0.12} &  0.08 &  2.72 & $0.08$ & $3.66$ & $0.04$ & $141$ \\
        \hline
        2D NL & SE & $\sigma_v/5$ & 0.09 & 1.23 & 0.08 & 1.68 & 0.09 & 0.94 &  0.32 &  7.47 & 0.32 & 9.29 & 0.08 & 350 \\
        2D NL & SE & $\sigma_v/10$ & 0.11 & 1.31 & 0.08 & 1.68 & 0.11 & 0.90 &  0.32 &  7.47 & 0.32 & 9.29 & 0.08 & 350 \\
        2D NL & DKL & $\sigma_v/5$ & 0.04 & 0.39 & 0.04 & 0.53 & 0.04 & 0.22 & 0.19 &  3.79  & 0.19 & 4.75 & 0.04 & 130 \\
        2D NL & DKL & $\sigma_v/10$ & 0.04 & \textbf{0.35} & 0.04 & 0.53 & 0.04 & \textbf{0.18} &  0.19  &  3.79 & 0.19 & 4.75 & 0.04 & 130 \\
        \hline
        3D DC & SE & $\sigma_v/5$ & 0.10 & 1.96 & 0.09 & 2.34 & 0.10 & 1.13 &  0.41 & 9.61 & 0.41 & 13.67 & 0.09 & 1071  \\
        3D DC & SE & $\sigma_v/10$ & 0.12 & 2.13& 0.09 & 2.34 & 0.12 & 1.06 & 0.41 &  9.61 & 0.41 & 13.67 & 0.09 & 1071 \\
        3D DC & DKL & $\sigma_v/5$ & 0.03 & 0.44 & 0.03 & 0.49 & 0.03 & 0.12 &  0.29 &  2.82 & 0.29 & 4.32 & 0.03 & 195 \\
        3D DC & DKL & $\sigma_v/10$ & 0.03 & \textbf{0.43} & 0.03 & 0.49 & 0.03 & \textbf{0.11} &  0.29 & 2.82 & 0.29 & 4.32 & 0.03 & 195 \\
        \hline
        5D Car & SE & $\sigma_v/5$ & 0.51 & 10.33 & 0.30 & 11.7 & 0.51 & 5.60 &  0.36  &  14.0 & 0.36 & 22.63  & 0.30 & 3475\\
        5D Car & SE & $\sigma_v/10$ & 0.68 & 13.0 & 0.30 & 11.7 & 0.68 & 6.16 &  0.36 & 14.0  & 0.36 & 22.63 & 0.30 & 3475 \\
        5D Car & DKL & $\sigma_v/5$ & 0.06 & 1.48 & 0.06 & 1.54 & 0.06 & 0.46 & 0.25 & 4.34 & 0.25 & 6.82 & 0.06 & 581 \\
        5D Car & DKL & $\sigma_v/10$ & 0.06 & \textbf{1.43} & 0.06 & 1.54 & 0.06 & \textbf{0.40} & 0.25 & 4.34 & 0.25 & 6.82 & 0.06 & 581\\
        \bottomrule
    \end{tabular}
    \caption{
    Average $L_1$ error bounds ($\|\epsilon\|_1$) over 10000 test points. 
    We report the value for various values of $\sigma_n$ and for the squared exponential kernel (SE) and for DKL. We report both the true error ($\|\mu - f\|_1$) induced by the model estimated empirically over $10^4$ samples and the bounds produced by Theorem~\ref{my_det_theorem} (Our Det.), Lemma~\ref{lemma:2} (Lem 2 Det.), and probabilistic bounds from Theorem~\ref{my_prob_theorem} (Our Prob.), \citeauthor{abbasi2013online} (AY Prob.), Lemma~\ref{lemma:1} (Lem 1 Prob.), and \citeauthor{Seeger} (SKKS Prob.) in order setting $\delta = 0.05$. Lemma~\ref{lemma:2} and \cite{Seeger} set $\sigma_n = \sigma_v$ and Lemma~\ref{lemma:1} and \cite{abbasi2013online} set $\sigma_n^2 = 1 + 2/m$. 
    }
    \label{table:predictive_results_with_truth}
\end{table*}
\begin{table*}[!h]
    \centering
    \begin{tabular} { c | c c c c | c c c c | c c c c}
        \toprule
        \multirow{2}{*}{Model} & \multicolumn{4}{c}{\underline{\quad Our Prob. \quad}}  & \multicolumn{4}{c}{\underline{\quad Our Det. \quad}} & \multicolumn{4}{c}{\underline{\quad Lem \ref{lemma:2} Det. \quad}} \\
        & $\eta$ & $\beta$ & $P_s$ & $t$ & $\eta$ & $\beta$ & $P_s$ & $t$ & $\eta$ & $\beta$ & $P_s$ & $t$ \\
        \hline
        Pendulum & 1e-6 & 1e-6 & 0.999 & 136 & 1e-6 & 0.077 & 0.923 & 147 & 1e-6 & 0.499 & 0.499 & 2979 \\
        % 3D Cartpole & - & - & - & - & - & - & - & - & - \\
        4D Linear & 1e-6 & 1e-6 & 0.999 & 451 & 1e-6 & 0.172 & 0.827 & 2880 & 1e-6 & 0.249 & 0.749 & 13233 \\
        \bottomrule
    \end{tabular}
    \caption{Barrier results, where $t$ is time in seconds taken to synthesize a barrier, $P_s = 1 - (\eta + N\beta)$, and $N=1$. 
    } 
    \label{table:Piecewise}
\end{table*}
We compare our bounds to existing bounds (Lemma \ref{lemma:1}-\ref{lemma:2}) and bounds proposed by~\citeauthor{abbasi2013online, Seeger} for a variety of dynamical systems from literature \cite{Jackson2021, Adams:CSL:2022, reed2023promises}, namely a 2D linear model, a 2D non-linear model, a 3D Dubin's car model, and a 5D second order car model. We consider identical noise distributions in all dimensions; for the 2D and 3D systems we consider $\pv\sim \text{Uniform}(-0.1, 0.1)$ and for the 5D system $\pv\sim \text{Uniform}(-0.2, 0.2)$. Results are in Table \ref{table:predictive_results_with_truth}. 

We note that modifying the value of $\sigma_n$ used for posterior predictions does not impact the accuracy of the model significantly but can be optimized to improve the error bounds. In all cases, we see that DKL (with a well trained neural network prior) significantly outperforms standard GP models when computing error bounds. We emphasize that the probabilistic bounds from \cite{abbasi2013online}, Lemma~\ref{lemma:1}, and \cite{Seeger} are significantly larger than ours for two primary reasons: (i) those bounds generalize to any conditional sub-Gaussian distribution while ours is specific to bounded support, and (ii) our bound incorporates significantly more information about the kernel, allowing an informed kernel to reduce the bound further than just the value of $\sigma_D(x)$. This results in our probabilistic bound being at least an order of magnitude smaller than prior works.

\textbf{Safety Certification of Unknown Stochastic Systems}
We consider the Inverted Pendulum (2D) agent from the Gymnasium environment and a contractive linear 4D system for data-driven safety certification using the formulation in Section~\ref{sec:applications}. We collected data for the Pendulum model under the best controller available from the OpenAI Leaderboard \cite{gymLeader} and perturb the systems with PERT distributions (a transformation of a Beta distribution with a specified mean). We construct a barrier using the Dual-Approach suggested in \cite{mazouz2024piecewise} and compare results when generating interval bounds on $f$ with Corollary~\ref{cor:uniform_bound} using Theorem~\ref{my_det_theorem}, Lemma~\ref{lemma:2}, and Theorem~\ref{my_prob_theorem} with $\delta = 0.05$ using DKL models.

The barriers identify a lower bound on the probability of the system remaining in a predefined safe set for a given horizon when initialized with $\theta, \dot{\theta} \in [-0.025, 0.025]\times[-0.055, 0.055]$ for the Pendulum and each state in $[-0.1, 0.1]$ for the 4D system. Barriers are synthesized on an Intel Core i7-12700K CPU at 3.60GHz with 32 GB of RAM. Results are reported in Table \ref{table:Piecewise}. Safety probabilities using bounds of Lemma \ref{lemma:1} and \cite{abbasi2013online} are not reported in the table because they result in $P_s = 0$ after three time steps for both models.

Barrier certificates using Theorem \ref{my_prob_theorem} are based on a 95\% confidence. We see that our bounds allow for a significant improvement in certification results as compared to existing bounds. For instance, for a time horizon $N=10$, applying Proposition~\ref{prop:barrier} with our bounds results in guarantees of $99.9$\% safety probability with 95\% confidence, whereas using Lemma~\ref{lemma:2} results in a $0$\% safety probability for the Pendulum model. We see similar results for the 4D system, where the deterministic bounds remain too conservative to identify the contractive nature of the system yet our probabilistic bounds enable guarantees of safety with high confidence. We also note that the reduced conservatism of our approach enable barrier synthesis to be significantly faster.

\section{Conclusion}
\label{sec:conclusion}
In this paper, we derive novel error bounds (both probabilistic and deterministic) for GP regression under bounded support noise. We demonstrate that by assuming sample noise has zero mean, error bounds that are tighter than sample noise can be achieved. 
We show that our error bounds utilize significantly more information about the kernel than existing probabilistic bounds (i.e. each term informed by $K_{\X,x}$); hence, they are well-suited for regression techniques based on informed kernels such as DKL that correlate $x$ with the entire dataset.  
We also show
the improvements that our bounds can provide over the state-of-the-art in safety certification of control systems through the use of stochastic barrier functions, generating certificates with significantly larger safety probabilities.
Future directions include application of these bounds in safe control and shield design in reinforcement learning.

\newpage
\section{Acknowledgments}
This work was supported by the Air Force Research Lab (AFRL) under agreement number FA9453-22-2-0050.

\bibliography{references}

\begin{thebibliography}{26}
\providecommand{\natexlab}[1]{#1}

\bibitem[{Abbasi-Yadkori(2013)}]{abbasi2013online}
Abbasi-Yadkori, Y. 2013.
\newblock Online learning for linearly parametrized control problems.

\bibitem[{Adams, Lahijanian, and Laurenti(2022)}]{Adams:CSL:2022}
Adams, S.~A.; Lahijanian, M.; and Laurenti, L. 2022.
\newblock Formal Control Synthesis for Stochastic Neural Network Dynamic Models.
\newblock \emph{IEEE Control Systems Letters}.

\bibitem[{Berkenkamp et~al.(2017)Berkenkamp, Turchetta, Schoellig, and Krause}]{berkenkamp2017safe}
Berkenkamp, F.; Turchetta, M.; Schoellig, A.; and Krause, A. 2017.
\newblock Safe model-based reinforcement learning with stability guarantees.
\newblock \emph{Advances in neural information processing systems}, 30.

\bibitem[{Chowdhury and Gopalan(2017)}]{Chowdhury}
Chowdhury, S.~R.; and Gopalan, A. 2017.
\newblock On kernelized multi-armed bandits.
\newblock In \emph{International Conference on Machine Learning}, 844--853. PMLR.

\bibitem[{Griffioen, Devonport, and Arcak(2023)}]{griffioen2023probabilistic}
Griffioen, P.; Devonport, A.; and Arcak, M. 2023.
\newblock Probabilistic Invariance for Gaussian Process State Space Models.
\newblock In \emph{Learning for Dynamics and Control Conference}, 458--468. PMLR.

\bibitem[{Hashimoto et~al.(2022)Hashimoto, Saoud, Kishida, Ushio, and Dimarogonas}]{hashimoto2022learning}
Hashimoto, K.; Saoud, A.; Kishida, M.; Ushio, T.; and Dimarogonas, D.~V. 2022.
\newblock Learning-based symbolic abstractions for nonlinear control systems.
\newblock \emph{Automatica}, 146: 110646.

\bibitem[{Jackson et~al.(2021{\natexlab{a}})Jackson, Laurenti, Frew, and Lahijanian}]{jackson2021formal}
Jackson, J.; Laurenti, L.; Frew, E.; and Lahijanian, M. 2021{\natexlab{a}}.
\newblock Formal verification of unknown dynamical systems via Gaussian process regression.
\newblock \emph{arXiv preprint arXiv:2201.00655}.

\bibitem[{Jackson et~al.(2021{\natexlab{b}})Jackson, Laurenti, Frew, and Lahijanian}]{Jackson2021}
Jackson, J.; Laurenti, L.; Frew, E.; and Lahijanian, M. 2021{\natexlab{b}}.
\newblock Strategy Synthesis for Partially-Known Switched Stochastic Systems.
\newblock In \emph{International Conference on Hybrid Systems: Computation and Control}, HSCC '21. New York, NY, USA: Association for Computing Machinery.
\newblock ISBN 9781450383394.

\bibitem[{Jagtap, Soudjani, and Zamani(2020)}]{jagtap2020formal}
Jagtap, P.; Soudjani, S.; and Zamani, M. 2020.
\newblock Formal synthesis of stochastic systems via control barrier certificates.
\newblock \emph{IEEE Transactions on Automatic Control}, 66(7): 3097--3110.

\bibitem[{Jensen, Nielsen, and Larsen(2013)}]{jensen2013bounded}
Jensen, B.~S.; Nielsen, J.~B.; and Larsen, J. 2013.
\newblock Bounded gaussian process regression.
\newblock In \emph{2013 IEEE international workshop on machine learning for signal processing (MLSP)}, 1--6. IEEE.

\bibitem[{Kushner(1967)}]{kushner1967stochastic}
Kushner, H.~J. 1967.
\newblock Stochastic stability and control.

\bibitem[{Lederer and Hirche(2019)}]{lederer2019local}
Lederer, A.; and Hirche, S. 2019.
\newblock Local asymptotic stability analysis and region of attraction estimation with gaussian processes.
\newblock In \emph{2019 IEEE 58th Conference on Decision and Control (CDC)}, 1766--1771. IEEE.

\bibitem[{Lederer, Umlauft, and Hirche(2019)}]{lederer2019uniform}
Lederer, A.; Umlauft, J.; and Hirche, S. 2019.
\newblock Uniform error bounds for gaussian process regression with application to safe control.
\newblock \emph{Advances in Neural Information Processing Systems}, 32.

\bibitem[{Maddalena, Scharnhorst, and Jones(2021)}]{maddalena2021deterministic}
Maddalena, E.~T.; Scharnhorst, P.; and Jones, C.~N. 2021.
\newblock Deterministic error bounds for kernel-based learning techniques under bounded noise.
\newblock \emph{Automatica}, 134: 109896.

\bibitem[{Mazouz et~al.(2024)Mazouz, Baymler~Mathiesen, Laurenti, and Lahijanian}]{mazouz2024piecewise}
Mazouz, R.; Baymler~Mathiesen, F.; Laurenti, L.; and Lahijanian, M. 2024.
\newblock {Piecewise Stochastic Barrier Functions}.
\newblock \emph{arXiv preprint arXiv:2404.16986}.

\bibitem[{Mazouz et~al.(2022)Mazouz, Muvvala, Ratheesh~Babu, Laurenti, and Lahijanian}]{mazouz2022safety}
Mazouz, R.; Muvvala, K.; Ratheesh~Babu, A.; Laurenti, L.; and Lahijanian, M. 2022.
\newblock Safety guarantees for neural network dynamic systems via stochastic barrier functions.
\newblock \emph{Advances in Neural Information Processing Systems}, 35: 9672--9686.

\bibitem[{Mohri, Rostamizadeh, and Talwalkar(2018)}]{mohri2018foundations}
Mohri, M.; Rostamizadeh, A.; and Talwalkar, A. 2018.
\newblock \emph{Foundations of machine learning}.
\newblock MIT press.

\bibitem[{Ober, Rasmussen, and van~der Wilk(2021)}]{Ober2021}
Ober, S.~W.; Rasmussen, C.~E.; and van~der Wilk, M. 2021.
\newblock The promises and pitfalls of deep kernel learning.
\newblock In \emph{Uncertainty in Artificial Intelligence}, 1206--1216. PMLR.

\bibitem[{OpenAI(2024)}]{gymLeader}
OpenAI. 2024.
\newblock Gym Leaderboard.
\newblock \url{https://github.com/openai/gym/wiki/Leaderboard}.
\newblock Accessed: 2024-05-10.

\bibitem[{Rasmussen, Williams et~al.(2006)}]{rasmussen:book:2006}
Rasmussen, C.~E.; Williams, C.~K.; et~al. 2006.
\newblock \emph{Gaussian processes for machine learning}, volume~1.
\newblock Springer.

\bibitem[{Reed, Laurenti, and Lahijanian(2023)}]{reed2023promises}
Reed, R.; Laurenti, L.; and Lahijanian, M. 2023.
\newblock Promises of deep kernel learning for control synthesis.
\newblock \emph{IEEE Control Systems Letters}.

\bibitem[{Santoyo, Dutreix, and Coogan(2021)}]{santoyo2021barrier}
Santoyo, C.; Dutreix, M.; and Coogan, S. 2021.
\newblock {A Barrier Function Approach to Finite-Time Stochastic System Verification and Control}.
\newblock \emph{Automatica}, 125: 109439.

\bibitem[{Snelson, Ghahramani, and Rasmussen(2003)}]{snelson2003warped}
Snelson, E.; Ghahramani, Z.; and Rasmussen, C. 2003.
\newblock Warped gaussian processes.
\newblock \emph{Advances in neural information processing systems}, 16.

\bibitem[{Srinivas et~al.(2012)Srinivas, Krause, Kakade, and Seeger}]{Seeger}
Srinivas, N.; Krause, A.; Kakade, S.~M.; and Seeger, M.~W. 2012.
\newblock Information-theoretic regret bounds for gaussian process optimization in the bandit setting.
\newblock \emph{IEEE transactions on information theory}, 58(5): 3250--3265.

\bibitem[{Steinwart(2001)}]{steinwart2001influence}
Steinwart, I. 2001.
\newblock On the influence of the kernel on the consistency of support vector machines.
\newblock \emph{Journal of machine learning research}, 2(Nov): 67--93.

\bibitem[{Wajid, Awan, and Zamani(2022)}]{wajid2022formal}
Wajid, R.; Awan, A.~U.; and Zamani, M. 2022.
\newblock Formal synthesis of safety controllers for unknown stochastic control systems using Gaussian process learning.
\newblock In \emph{Learning for Dynamics and Control Conference}, 624--636. PMLR.

\end{thebibliography}

\newpage

\appendix
\section{Appendix: Alternate RKHS Bounds}
\label{sec:appendix_prob}
\subsection{Alternate Probabilistic Bounds}
The following are the probabilistic bounds proposed by \cite{abbasi2013online}, written in a format consistent with the notation of GPR rather than Kernel Ridge Regression.
\begin{lemma}[{\cite[Theorem 3.11]{abbasi2013online}}]
    Let $X \subset \reals^d$ be a compact set, $B > 0$ be the bound on $\|f\|_{\kappa} \leq B$, and $L \in \reals_{>0}$ be the maximum value of $\kappa(\cdot,\cdot)$ over $X$.
    If noise $\pv$ has a conditional $R$-sub-Gaussian distribution and $\mu_{D}$ and $\sigma_{D}$ are obtained via Equations \eqref{mean_pred}-\eqref{covar_pred} on a dataset $D$ of size $m$ with any choice of $\sigma_n > 0$, then it holds that, for every $\delta \in (0,1]$,
    \begin{align}
        \mathbb{P}\left(\forall x \in X,\; |\mu_{D}(x) - f(x)| \leq \sigma_{D}(x) \Delta(\delta) \right) \geq 1 - \delta,
    \end{align}
    where $\Delta(\delta) = \left( \frac{R}{\sigma_n} \sqrt {2 \log \left(\frac{1}{\delta}\right)  + \frac{(m - 1)L^2}{\sigma_n^2} } + B \right)$.
    \label{lemma:thesis}
\end{lemma}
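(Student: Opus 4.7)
The plan is to follow the kernel ridge regression analysis of \citet{abbasi2013online}: lift to the reproducing kernel Hilbert space $\mathcal{H}_\kappa$, express the prediction error through the reproducing property, and control the resulting noise term with a self-normalized martingale inequality. Concretely, let $\phi$ be a feature map with $\kappa(x,x') = \langle \phi(x), \phi(x')\rangle_{\mathcal{H}_\kappa}$, so that $f(x) = \langle f, \phi(x)\rangle_{\mathcal{H}_\kappa}$ by Assumption~\ref{assumption:1}. Let $\Phi$ be the operator $g \mapsto (\langle g, \phi(x_i)\rangle_{\mathcal{H}_\kappa})_{i=1}^m$ and set $V_m = \Phi^*\Phi + \sigma_n^2 I$. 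A standard calculation identifies the RKHS ridge estimator $\hat f_m = V_m^{-1}\Phi^* Y$ with the GPR mean, i.e.\ $\mu_{D}(x) = \langle \phi(x), \hat f_m\rangle_{\mathcal{H}_\kappa}$.

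With this in place, substituting $Y = \Phi f + \V$ yields the decomposition $\hat f_m - f = -\sigma_n^2 V_m^{-1} f + V_m^{-1}\Phi^*\V$, and the reproducing property combined with Cauchy--Schwarz in the $V_m$-inner product gives
\begin{equation*}
|\mu_{D}(x) - f(x)| \leq \|\phi(x)\|_{V_m^{-1}}\,\|\hat f_m - f\|_{V_m}.
\end{equation*}
A Woodbury computation identifies $\|\phi(x)\|_{V_m^{-1}}^2 = \sigma_{D}^2(x)/\sigma_n^2$, so the first factor equals $\sigma_{D}(x)/\sigma_n$. For the second factor, the triangle inequality gives $\|\hat f_m - f\|_{V_m} \leq \sigma_n^2\|f\|_{V_m^{-1}} + \|\Phi^*\V\|_{V_m^{-1}}$; the prior term is bounded by $\sigma_n B$ using $V_m \succeq \sigma_n^2 I$ and $\|f\|_{\mathcal{H}_\kappa}\leq B$.

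The crux is the noise term, which I would bound via the self-normalized concentration inequality for Hilbert-space-valued martingales: for conditionally $R$-sub-Gaussian noise,
\begin{equation*}
\|\Phi^*\V\|_{V_m^{-1}}^2 \leq 2R^2 \log\!\bigl(\det(V_m/\sigma_n^2)^{1/2}/\delta\bigr)
\end{equation*}
with probability at least $1-\delta$. Using the Sylvester identity $\det(V_m/\sigma_n^2) = \det(I + K_{\X,\X}/\sigma_n^2)$, and then $\log(1+x)\leq x$ together with the uniform kernel bound, yields $\log\det(I + K_{\X,\X}/\sigma_n^2) \leq \mathrm{tr}(K_{\X,\X})/\sigma_n^2 \leq (m-1)L^2/\sigma_n^2$. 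Assembling the pieces produces $|\mu_{D}(x) - f(x)| \leq \sigma_{D}(x)\Delta(\delta)$. Uniformity over $X$ is automatic, because all $x$-dependence sits in the factor $\sigma_{D}(x)/\sigma_n$, whereas the high-probability event is the single inequality on $\|\hat f_m - f\|_{V_m}$, which does not involve $x$.

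The main obstacle will be the self-normalized inequality itself, whose proof uses a Laplace method of mixtures for vector-valued supermartingales and must be extended to the Hilbert-space setting; this is nontrivial but done explicitly in \citet{abbasi2013online}, so in practice I would invoke it as a black box. A secondary bookkeeping subtlety is the convention on $L$: the statement defines $L$ as a bound on $\kappa$, whereas the $L^2$ in $\Delta(\delta)$ naturally arises when $L$ bounds the feature norm $\|\phi(x)\|$, so one must either restate $L$ accordingly or absorb a constant into the final expression.
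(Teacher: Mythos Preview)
The paper does not prove this lemma; it is stated in Appendix~A as a direct citation of \cite[Theorem~3.11]{abbasi2013online} and is used only as a comparison baseline. There is therefore no ``paper's own proof'' to compare against.

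That said, your sketch is correct and is precisely the argument of \citet{abbasi2013online}: lift to the RKHS, identify the ridge estimator with $\mu_D$, decompose the error into a bias term and a noise term, apply Cauchy--Schwarz in the $V_m$-inner product, recognize $\|\phi(x)\|_{V_m^{-1}} = \sigma_D(x)/\sigma_n$ via Woodbury, and control $\|\Phi^*\V\|_{V_m^{-1}}$ with the self-normalized martingale bound followed by the trace bound on the log-determinant. Your observation that the uniformity in $x$ is free because the high-probability event does not depend on $x$ is exactly right. The bookkeeping issue you flag at the end is real: with $L$ defined as a bound on $\kappa$ one obtains $\mathrm{tr}(K_{\X,\X})\leq mL$, whereas $(m-1)L^2/\sigma_n^2$ corresponds to $L$ bounding $\|\phi(x)\|$ and an online indexing convention in which the regularized Gram matrix at round $m$ is built from $m-1$ past points; this is a transcription artifact in the paper's restatement, not a flaw in your argument.
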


We note that the value of $\sigma_n$ is a decision variable here. We set $\sigma_n^2 = 1 + 2/m$ as in the bounds for \cite{Chowdhury} as these produced the best results with Lemma \ref{lemma:thesis} as shown in Table \ref{table:thesis_pred_results}. We also show the similarity of these bounds with the results of Lemma \ref{lemma:1} in Figure \ref{fig:1d_gp_appendix}, which compares the bounds for the example in Figure \ref{fig:1d_GP_b}.

\begin{table}[h]
    \centering
    \begin{tabular} { l c c | c  c }
        \toprule
        \multirow{2}{*}{System}& \multirow{2}{*}{$\kappa$}& \multirow{2}{*}{$\sigma_n$} & \multicolumn{2}{c}{\underline{\quad Lem \ref{lemma:thesis} Prob. \quad}}\\
          & & & true & $\|\epsilon\|_1$  \\
        \midrule
        2D Lin & SE & $\sigma_v/5$ & 0.06 & 44.7  \\
        2D Lin & SE & $\sigma_v/10$ & 0.07 & 101  \\
        2D Lin & SE & $\sqrt{(1 + 2/m)}$ & 0.10 & 5.72\\
        2D Lin & DKL & $\sigma_v/5$ & 0.04 & 45.5  \\
        2D Lin & DKL & $\sigma_v/10$ & 0.04 & 21.5 \\
        2D Lin & DKL & $\sqrt{(1 + 2/m)}$ & 0.08 &  2.72 \\
        \midrule
        2D NL & SE & $\sigma_v/5$ & 0.04 & 68.8 \\
        2D NL & SE & $\sigma_v/10$  & 0.11 & 178 \\
        2D NL & SE & $\sqrt{(1 + 2/m)}$  & 0.32 &  7.47 \\
        2D NL & DKL & $\sigma_v/5$ & 0.04  & 23.2  \\
        2D NL & DKL & $\sigma_v/10$ & 0.04  & 48.2  \\
        2D NL & DKL & $\sqrt{(1 + 2/m)}$ & 0.19  &  3.79 \\
        \midrule
        3D DC & SE & $\sigma_v/5$  & 0.10 & 171  \\
        3D DC & SE & $\sigma_v/10$ & 0.10 & 429  \\
        3D DC & SE & $\sqrt{(1 + 2/m)}$ & 0.41 & 9.61  \\
        3D DC & DKL & $\sigma_v/5$ & 0.03  & 26.7 \\
        3D DC & DKL & $\sigma_v/10$ & 0.03  & 55.7 \\
        3D DC & DKL & $\sqrt{(1 + 2/m)}$ &  0.29 & 2.82  \\
        \midrule
        5D Car & SE & $\sigma_v/5$  & 0.51 & 409 \\
        5D Car & SE & $\sigma_v/10$ & 0.68 & 1373 \\
        5D Car & SE & $\sqrt{(1 + 2/m)}$ &  0.36 &  14.0 \\
        5D Car & DKL & $\sigma_v/5$  & 0.06  & 41.5 \\
        5D Car & DKL & $\sigma_v/10$  & 0.06  & 85.5 \\
        5D Car & DKL & $\sqrt{(1 + 2/m)}$  & 0.25 & 4.34 \\
        \bottomrule
    \end{tabular}
    \caption{
    Average $L_1$ error bounds ($\|\epsilon\|_1$) over 10000 test points. 
    We report the value for various values of $\sigma_n$ and for the squared exponential kernel (SE) and for DKL. We report both the true error ($\|\mu - f\|_1$) induced by the model estimated empirically over 10000 samples and the bounds from Lemma \ref{lemma:thesis} \cite{abbasi2013online} setting $\delta = 0.05$. 
    }
    \label{table:thesis_pred_results}
\end{table}

\begin{figure}[h]
    \centering
    \begin{subfigure}[c]{0.23\textwidth}
        \includegraphics[width=\textwidth]{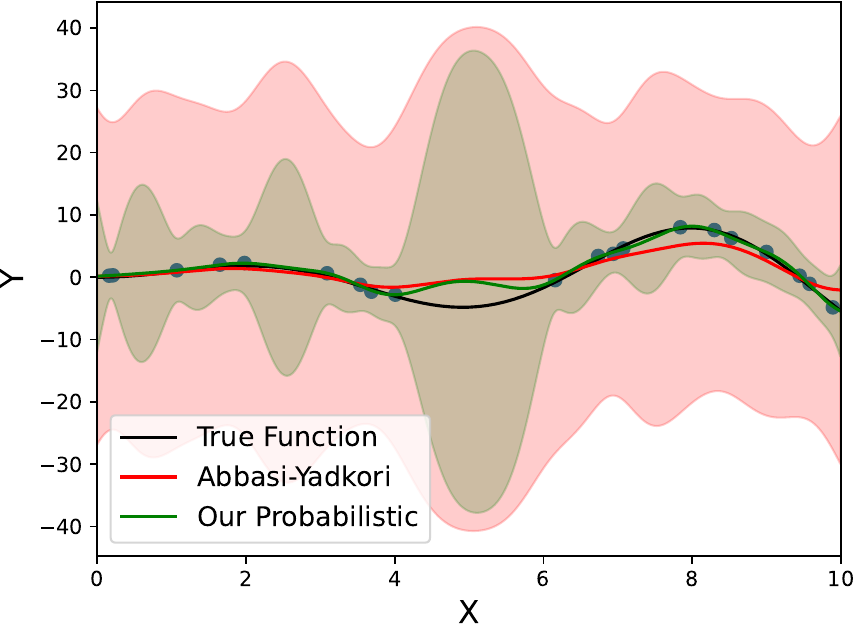}
        \caption{Abbasi-Yadkori Bounds} \label{fig:1d_GP_a_appendix}
    \end{subfigure}
    \begin{subfigure}[c]{0.23\textwidth}
        \includegraphics[width=\textwidth]{figures/prob_chowdhury_fnc_bounds_5.0-crop.pdf}
        \caption{Chowdhury Bounds} \label{fig:1d_GP_b_appendix}
    \end{subfigure}
    \caption{Predictive mean and probabilistic error bounds for $\delta = 0.05$ when learning from 20 samples of $y = x\sin(x) + v$ with $|v| < 0.5$. In (a) we plot the comparison with \cite{abbasi2013online}, and in (b) we show the comparison with \cite{Chowdhury} We set $\sigma_n = 0.1$ for our predictions and $\sigma_n^2 = 1 + 2/20$ for \cite{Chowdhury} as per Lemma \ref{lemma:1} and for \cite{abbasi2013online}
    }
    \label{fig:1d_gp_appendix}
\end{figure}

The bounds proposed in \cite{Seeger} are as follows.
\begin{lemma} [{\cite[Theorem 6]{Seeger}}]
    Let $X \subset \reals^d$ be a compact set, $B > 0$ be the bound on $\|f\|_{\kappa} \leq B$, and $\Gamma \in \reals_{\geq0}$ be the maximum information gain of the kernel $\kappa$ with $m$ data points.
    If noise $\pv$ has a conditional $R$-sub-Gaussian distribution and $\mu_{D}$ and $\sigma_{D}$ are obtained via Equations \eqref{mean_pred}-\eqref{covar_pred} on a dataset $D$ of size $m$ with any choice of $\sigma_n = \sigma_v$, then it holds that, for every $\delta \in (0,1]$,
    \begin{align}
        \mathbb{P}\left(\forall x \in X,\; |\mu_{D}(x) - f(x)| \leq \sigma_{D}(x) \xi(\delta) \right) \geq 1 - \delta,
    \end{align}
    where $\xi(\delta) = \sqrt {2B^2 + 300\Gamma \log^3(m/\delta) }$.
    \label{lemma:seeger}
\end{lemma}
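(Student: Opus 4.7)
The target statement is Lemma~\ref{lemma:seeger}, a probabilistic regression-error bound for GPs under conditional $R$-sub-Gaussian noise. The plan is to adapt the standard self-normalized concentration framework used for kernel bandits and online kernel regression (in the spirit of \cite{abbasi2013online, Chowdhury}) to the GP posterior viewed outside the Bayesian interpretation. First I would write the error in the same way as Eqn~\eqref{Eqn:InequalityProof}:
\begin{align*}
\mu_D(x) - f(x) &= K_{x,\X}(K_{\X,\X}+\sigma_n^2 I)^{-1}(f(\X)+\V) - f(x)\\
&= \bigl[K_{x,\X} G f(\X) - f(x)\bigr] + K_{x,\X} G \V,
\end{align*}
and show that each of the two pieces is bounded by $\sigma_D(x)$ times a scalar factor.

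For the noiseless prediction term, I would use the reproducing property $f(x) = \langle f, \kappa(\cdot,x)\rangle_\kappa$ together with the identity $\sigma_D(x)^2 = \kappa(x,x) - K_{x,\X} G K_{\X,x}$ in essentially the same way Lemma~\ref{lemma:3} is used. This yields a bound of the form $|K_{x,\X} G f(\X) - f(x)| \leq \|f\|_\kappa \,\sigma_D(x) \leq B\, \sigma_D(x)$, which supplies the $\sqrt{2B^2}$ contribution inside $\xi(\delta)$ up to the way the two pieces are combined into a single quadratic form.

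For the noise term, the workhorse is a self-normalized martingale concentration inequality for conditionally sub-Gaussian sequences (of the Peña--Lai--Shao type, as in \cite[Thm.~1]{abbasi2013online}). Applied to the filtration generated by $x_i$ and previous noises, it gives that, with probability at least $1-\delta$, a quadratic form like $\V^\top (K_{\X,\X}+\sigma_n^2 I)^{-1}\V$ is bounded by $R^2\bigl(\log\det(I+\sigma_n^{-2}K_{\X,\X}) + 2\log(1/\delta)\bigr)$. The connection to information gain enters through the identity $\tfrac12 \log\det(I + \sigma_n^{-2} K_{\X,\X}) \leq \Gamma$ over all input sequences of length $m$. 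Multiplying by $\sigma_D(x)$ in the appropriate way (via Cauchy--Schwarz in the feature space, which is why the result takes the form $\sigma_D(x)\cdot\xi(\delta)$) then produces a pointwise probabilistic bound; a covering-number argument over the compact set $X$, combined with Lipschitz continuity of $\kappa$ (which is continuously differentiable by Assumption~\ref{assumption:1}), extends the pointwise bound uniformly to all $x\in X$.

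The hardest part will be matching the precise $\log^3(m/\delta)$ dependence and the constant $300$. These are not generic artifacts of a single concentration step; in the original Seeger analysis they come from iterating a self-normalized deviation inequality together with a peeling argument over $\|f\|_\kappa$ and a covering of $X$, each of which contributes its own logarithmic factor. A proof following the skeleton above would easily yield an expression of the form $\sigma_D(x)\sqrt{c_1 B^2 + c_2 \Gamma\,\mathrm{polylog}(m/\delta)}$, but pinning down the exact constant $300$ and the exponent $3$ on the log requires tracking constants through each of the union-bound/peeling steps carefully, which is the main technical burden.
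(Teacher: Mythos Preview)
The paper does not prove Lemma~\ref{lemma:seeger} at all: it is stated in the appendix purely as a citation of \cite[Theorem~6]{Seeger}, included only for empirical comparison in Table~\ref{table:predictive_results_with_truth}. There is therefore no proof in the paper to compare your proposal against.

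That said, your sketch is a sensible reconstruction of how such a bound is obtained, and the decomposition you use (noiseless prediction error controlled by $B\,\sigma_D(x)$, noise term controlled via a self-normalized martingale inequality tied to $\log\det(I+\sigma_n^{-2}K_{\X,\X})\le 2\Gamma$, then a covering argument over $X$) is exactly the template behind the \cite{Seeger,abbasi2013online,Chowdhury} family of results. You are also right that the $300$ and the $\log^3(m/\delta)$ are specific to Seeger's original peeling/union-bound scheme and cannot be read off from the generic one-step concentration; recovering those constants genuinely requires going back to the original argument in \cite{Seeger}, which is outside the scope of this paper. So your proposal is correct as an outline but, by your own admission, incomplete for the stated constants---and the present paper offers no additional help, since it treats the lemma as an imported black box.
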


\subsection{Alternate Deterministic Error Bounds}
The bounds proposed in \cite{maddalena2021deterministic} are as follows, written in a format consistent with GPR rather than Kernel Ridge Regression (recall that this work requires the kernel to be strictly positive definite). We note that this bound is larger than the bound derived in Theorem \ref{my_det_theorem} in general.

\begin{lemma}[{\cite[Theorem 1]{maddalena2021deterministic}}]
    Let $K_{\X,\X}$ be a positive definite kernel matrix. Then with $m$ data points in $\X$, with observations $Y$, and $\sigma_n > 0$, then
    \begin{multline}
    |\mu_{D}(x) - f(x)| \leq P(x)\sqrt{B^2 + \Delta - Y^TK_{\X,\X}^{-1}Y} + \\
    \tilde{\Lambda}_x + |Y^T ( K_{\X,\X} + \frac{1}{m\sigma_n}K_{\X,\X}^2)^{-1}K_{\X,x}| \label{maddalena_bound}
\end{multline}
where 
$$P(x) = \sqrt{\kappa(x,x) - K_{x, \X}K_{\X,\X}^{-1} K_{\X, x}},$$
$$\Delta = \max_{-\sigma_v \leq v_i \leq \sigma_v, i=1,...,m}{\mathcal{V}^T K_{\X,\X}^{-1} \mathcal{V} + 2Y^TK_{\X,\X}^{-1}\mathcal{V}},$$
and $\tilde{\Lambda}_x = \sum_{i=1}^m \sigma_v |[K_{\X,\X}^{-1} K_{\X,x}]_i|.$
\end{lemma}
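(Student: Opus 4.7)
The plan is to derive the bound via a triangle-inequality decomposition of the GPR error into three contributions, each matched to one summand in the statement. I would introduce the noise-free kernel interpolant $\tilde{\mu}(x) := K_{x,\X} K_{\X,\X}^{-1} f(\X)$ and the noisy-data kernel interpolant $\hat{\mu}(x) := K_{x,\X} K_{\X,\X}^{-1} Y$, both well-defined because $K_{\X,\X}$ is strictly positive definite. The triangle inequality then gives $|\mu_{D}(x) - f(x)| \leq |\tilde{\mu}(x) - f(x)| + |\hat{\mu}(x) - \tilde{\mu}(x)| + |\mu_{D}(x) - \hat{\mu}(x)|$, and I would bound each of these three pieces in turn so that they collapse onto the three summands of the statement.

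For the noise-free interpolation error $|\tilde{\mu}(x) - f(x)|$, I would invoke the classical power-function bound $|\tilde{\mu}(x) - f(x)| \leq P(x)\|f - \tilde{\mu}\|_\kappa$ together with the Pythagorean identity $\|f - \tilde{\mu}\|_\kappa^2 = \|f\|_\kappa^2 - f(\X)^T K_{\X,\X}^{-1} f(\X)$, both standard consequences of the reproducing property combined with the fact that $\tilde{\mu}$ is the norm-minimizing interpolant. Substituting $f(\X) = Y - \V$ and expanding rewrites the right-hand side as $\|f\|_\kappa^2 - Y^T K_{\X,\X}^{-1} Y + 2Y^T K_{\X,\X}^{-1}\V - \V^T K_{\X,\X}^{-1}\V$. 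Since $\V^T K_{\X,\X}^{-1}\V \geq 0$, this is at most $\|f\|_\kappa^2 - Y^T K_{\X,\X}^{-1} Y + (\V^T K_{\X,\X}^{-1}\V + 2Y^T K_{\X,\X}^{-1}\V) \leq B^2 + \Delta - Y^T K_{\X,\X}^{-1} Y$, where the last inequality uses $\|f\|_\kappa \leq B$ and the defining maximum for $\Delta$. Taking square roots yields the first summand $P(x)\sqrt{B^2 + \Delta - Y^T K_{\X,\X}^{-1} Y}$.

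The second piece reduces to $|K_{x,\X} K_{\X,\X}^{-1} \V|$, and a componentwise triangle inequality with $|v_i| \leq \sigma_v$ yields $\tilde{\Lambda}_x = \sigma_v \sum_i |[K_{\X,\X}^{-1} K_{\X,x}]_i|$ immediately. For the third piece, I would write $\mu_{D}(x) - \hat{\mu}(x) = K_{x,\X}[(K_{\X,\X} + \lambda I)^{-1} - K_{\X,\X}^{-1}] Y$, where $\lambda$ denotes the KRR regularization parameter in Maddalena's convention (matched to the statement by $\lambda = m\sigma_n$). The resolvent identity $A^{-1} - B^{-1} = -A^{-1}(A-B)B^{-1}$ with $A = K_{\X,\X} + \lambda I$ and $B = K_{\X,\X}$ simplifies this to $-\lambda K_{x,\X}(K_{\X,\X} + \lambda I)^{-1} K_{\X,\X}^{-1} Y$, and the factorization $(K_{\X,\X} + \lambda I)K_{\X,\X} = \lambda(K_{\X,\X} + K_{\X,\X}^2/\lambda)$ collapses this further to $-K_{x,\X}(K_{\X,\X} + K_{\X,\X}^2/\lambda)^{-1} Y$; taking absolute values and substituting $\lambda = m\sigma_n$ produces the third summand.

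The main obstacle I anticipate is the algebraic alignment of the third summand: the unusual form $(K_{\X,\X} + K_{\X,\X}^2/(m\sigma_n))^{-1}$ arises from Maddalena's specific KRR parameterization, and the resolvent manipulation plus the factorization above must be executed precisely to reproduce it from the posterior formula in \eqref{mean_pred}. A secondary subtlety is that the Pythagorean native-norm identity must be applied to the noise-free interpolant $\tilde{\mu}$ and not the noisy interpolant $\hat{\mu}$, so that the worst-case quadratic and cross-noise terms are absorbed exactly into $\Delta$ rather than producing a looser bound; this is why the decomposition must route through $\tilde{\mu}$ as an intermediate object even though the data $f(\X)$ is unobservable.
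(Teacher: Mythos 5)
Your proposal is correct, but note that the paper itself offers no proof of this lemma: it is imported verbatim (as \cite[Theorem 1]{maddalena2021deterministic}) purely for comparison against Theorem \ref{my_det_theorem}, so there is no in-paper argument to compare against. Your reconstruction does, however, follow essentially the same route as the original source: a triangle-inequality split through the noise-free interpolant $K_{x,\X}K_{\X,\X}^{-1}f(\X)$ and the noisy interpolant $K_{x,\X}K_{\X,\X}^{-1}Y$, with the power-function/Pythagorean bound absorbing the worst-case noise terms into $\Delta$, the componentwise bound $|v_i|\leq\sigma_v$ giving $\tilde{\Lambda}_x$, and the resolvent identity producing the regularization term; all three steps check out, including the factorization $K_{\X,\X}^{-1}(K_{\X,\X}+\lambda I)^{-1} = \frac{1}{\lambda}\left(K_{\X,\X} + \frac{1}{\lambda}K_{\X,\X}^2\right)^{-1}$. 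You were also right to flag the parameterization: the term $\left(K_{\X,\X} + \frac{1}{m\sigma_n}K_{\X,\X}^2\right)^{-1}$ only emerges if the predictor carries ridge parameter $\lambda = m\sigma_n$ (Maddalena's KRR convention with the empirical loss averaged over $m$), whereas the GPR mean in Eqn~\eqref{mean_pred} would give $\lambda = \sigma_n^2$ and hence $\left(K_{\X,\X} + \frac{1}{\sigma_n^2}K_{\X,\X}^2\right)^{-1}$; this is a latent inconsistency in the paper's ``GPR-format'' restatement of the lemma, not an error in your derivation, and your choice to match the statement's convention is the right one.
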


We see that $\tilde{\Lambda}_x$ follows a similar form to the $\Lambda_x$ derived in Theorem \ref{my_det_theorem} (denoted as $\tilde{\Lambda}_x$ to make a visual comparison) however, since the inverse does not include the regularization term $\sigma_n^2I$ (i.e. $K_{\X,\X}^{-1}$ versus $(K_{\X,\X} + \sigma_n^2I)^{-1}$) then $\tilde{\Lambda}_x > \Lambda_x$ in general.
There is also an additional additive term $|Y^T (K_{\X,\X} + \sigma_n^{-2}K_{\X,\X}^2)^{-1}K_{\X,x}|$ in the bound derived by \cite{maddalena2021deterministic}, which introduces more conservatism when compared to our bound.
Hence, in general their deterministic error bound will be larger than our deterministic error bound.

Work \cite{maddalena2021deterministic} states that $\Delta - Y^TK_{X,X}^{-1}Y$ to can be set to 0 to avoid needing to compute $\Delta$. Similarly in our setting we can set $c^* = 0$. Note that if $\sigma_n = 0$ then $\sigma_D(x) = P(x)$. Our Theorem limit $\sigma_n > 0$ in order to guarantee the positive definiteness of $K_{X,X} + \sigma_n^2I$, however in the case that $K_{X,X}$ is strictly positive definite and invertible we can set $\sigma_n = 0$. Then the formulation for our deterministic error bound would be written as (using the same notation as the Equation~\eqref{maddalena_bound})
\begin{align}
    |\mu_{D}(x) - f(x)| \leq P(x)B + 
    \tilde{\Lambda}_x
\end{align}
whereas Equation~\eqref{maddalena_bound} still retains the additional additive term $|Y^T ( K_{\X,\X} + \frac{1}{m\sigma_n}K_{\X,\X}^2)^{-1}K_{\X,x}|$. Note that $\sigma_n$ cannot be 0 in this formulation and is not typically feasible for our formulation either. However, it is clear that our bound is tighter.

\section{Appendix: Proofs and Extended Discussion}
\subsection{Proof of Lemma \ref{lemma:3}}
\label{App:Lemma_proof}
\begin{proof}
Recall $G = (K_{\X,\X} + \sigma_n^2I)^{-1}$ and $W_x = K_{x,\X}G$.
Define in the RKHS the inner product $\{f,g\} = \langle f, g \rangle_{\kappa} - f(\X)^TGg(\X)$. Note that this remains positive definite as $f(\X)^TGf(\X) \leq \langle f, f \rangle_{\kappa}$ for any $G$ such that $K_{\X,\X} \leq G^{-1}$ (which is guaranteed for $\sigma_n > 0$) since $f(\X)^TK_{\X,\X}f(\X) \leq \langle f, f \rangle_{\kappa}$ by definition. Then
\begin{align}
    |\{f,\kappa(x,\cdot)\}| &= |\langle f, \kappa(x,\cdot) \rangle_{\kappa} - f(\X)^TG\kappa(\X,x)| \\
    &= |f(x) - W_xf(\X)|
\end{align}
and by the Cauchy-Schwarz inequality
\begin{align}
    |\{f,\kappa(x,\cdot)\}| &\leq \{\kappa(x,\cdot),\kappa(x,\cdot)\}^{1/2}\{f,f\}^{1/2} \\
    & \leq \sigma_D(x) \sqrt{\langle f, f \rangle_{\kappa} - f(\X)^TGf(\X)} \\
    & \leq \sigma_D(x) \sqrt{B^2 - c^*}
\end{align}
when $c^* \leq f(\X)^TGf(\X)$.
\end{proof}

\subsection{Proof of Theorem \ref{my_prob_theorem}}
\label{App:theorem_proof}
\begin{proof}
    This proof uses the error bound in Equation~\eqref{Eqn:InequalityProof}.  By applying Lemma~\ref{lemma:3} to the first term of Equation~\eqref{Eqn:InequalityProof}, we obtain
    \begin{align}
        |\mu_{D}(x) - f(x)| \leq \sigma_{D}(x)\sqrt{B^2 - c^*} + |W_x \V|. \label{err_bound_det}
    \end{align}
    To bound the second term $|W_x\V|$, we use Hoeffding's Inequality.
    \begin{lemma} [{\cite[Hoeffding's Inequality]{mohri2018foundations}}] \label{hoeffding_proof}
        Let $\px_1, \ldots, \px_m$ be independent random variables such that $a_i \leq \px_i \leq b_i$ almost surely. Consider the sum of these random variables $S_m = \sum_{i=1}^m \px_i$, then
        \begin{align}
            \label{eq:Hoeffdings Inequality Proof}
            \mathbb{P}(|S_m - \expect[S_m]| \geq t) \leq 2 \exp \left(\frac{-2t^2}{\sum_{i=1}^m(b_i-a_i)^2} \right). 
        \end{align} 
    \end{lemma}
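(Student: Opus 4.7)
The plan is to prove Hoeffding's inequality via the standard Chernoff-bounding / moment generating function approach, which consists of four clean steps plus a concentration lemma at the core. First, I would reduce the two-sided statement to a one-sided tail: by a union bound, it suffices to show $\mathbb{P}(S_m - \expect[S_m] \geq t) \leq \exp(-2t^2/\sum_i (b_i-a_i)^2)$, as the symmetric one-sided bound on $-(S_m - \expect[S_m])$ follows by applying the same result to the variables $-\px_i \in [-b_i, -a_i]$, and then summing the two tails.

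Second, I would apply Markov's inequality to the exponentiated deviation: for any $s > 0$,
\begin{align*}
\mathbb{P}(S_m - \expect[S_m] \geq t) &= \mathbb{P}\bigl(e^{s(S_m - \expect[S_m])} \geq e^{st}\bigr) \\
&\leq e^{-st}\,\expect\bigl[e^{s(S_m - \expect[S_m])}\bigr].
\end{align*}
Independence of the $\px_i$ then factorizes the moment generating function:
\[
\expect\bigl[e^{s(S_m - \expect[S_m])}\bigr] = \prod_{i=1}^m \expect\bigl[e^{s(\px_i - \expect[\px_i])}\bigr].
\]

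Third, and this is the heart of the argument, I would establish Hoeffding's Lemma: for a bounded zero-mean random variable $Z \in [\alpha, \beta]$, one has $\expect[e^{sZ}] \leq \exp\bigl(s^2(\beta-\alpha)^2/8\bigr)$. The proof exploits convexity of $e^{sx}$: writing $x = \lambda \alpha + (1-\lambda)\beta$ with $\lambda = (\beta - x)/(\beta - \alpha)$ gives $e^{sx} \leq \lambda e^{s\alpha} + (1-\lambda) e^{s\beta}$, so taking expectations and using $\expect[Z]=0$ yields an explicit upper bound whose logarithm $\varphi(s)$ I would then analyze via a second-order Taylor expansion; a direct computation shows $\varphi(0)=\varphi'(0)=0$ and $\varphi''(u) \leq (\beta-\alpha)^2/4$ uniformly in $u$, so $\varphi(s) \leq s^2(\beta-\alpha)^2/8$. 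Applying this with $Z = \px_i - \expect[\px_i] \in [a_i - \expect[\px_i], b_i - \expect[\px_i]]$ (an interval of length $b_i - a_i$) gives $\expect[e^{s(\px_i - \expect[\px_i])}] \leq \exp\bigl(s^2(b_i-a_i)^2/8\bigr)$.

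Finally, I would combine the pieces and optimize in $s$. Substituting the per-coordinate bounds into the product produces
\[
\mathbb{P}(S_m - \expect[S_m] \geq t) \leq \exp\!\Bigl(-st + \tfrac{s^2}{8}\textstyle\sum_{i=1}^m (b_i-a_i)^2\Bigr),
\]
and minimizing the exponent in $s>0$ at $s^\ast = 4t/\sum_i (b_i-a_i)^2$ gives the claimed one-sided bound. Doubling for the two-sided version completes the proof. I expect the main technical obstacle to be establishing the tight constant $(\beta-\alpha)^2/8$ in Hoeffding's Lemma: the convexity step is elementary, but extracting the correct sub-Gaussian constant requires the Taylor-expansion bound on $\varphi''$ and care to verify it holds uniformly; everything else is bookkeeping around Chernoff's method.
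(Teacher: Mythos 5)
Your proof is correct and complete: the union-bound reduction to a one-sided tail, the Chernoff bound with factorization of the moment generating function by independence, Hoeffding's Lemma with the constant $(\beta-\alpha)^2/8$ obtained from $\varphi(0)=\varphi'(0)=0$ and $\varphi''(u) \leq (\beta-\alpha)^2/4$, and the optimization at $s^\ast = 4t/\sum_{i=1}^m (b_i-a_i)^2$ all check out and yield exactly the stated bound. The paper offers no proof of this lemma---it imports it directly from the cited reference \cite{mohri2018foundations}---and your argument is precisely the standard Chernoff-method proof given there, so the two approaches coincide.
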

    Using Lemma~\ref{hoeffding_proof}, we can reason about $|W_x \V|$ probabilistically as
    \begin{align}
        W_x \V = \sum_{i=1}^m [W_x]_i v_i
    \end{align}
    where $[W_x]_i$ denotes the $i$-th term of $W_x$. 
    Hence, the $S_m$ term in Equation~\eqref{eq:Hoeffdings Inequality Proof} becomes $W_x \V$ in our setting.
    Note that since each noise term $v_i$ has 0 mean, $\expect[W_x \V] = 0$. Also each $-\sigma_v \leq v_i \leq +\sigma_v$;
    hence, the denominator on the RHS of Equation~\eqref{eq:Hoeffdings Inequality Proof} becomes 
    \begin{align}
        % &\sum_{i=1}^m(\sigma_v [W_x]_i - -\sigma_v [W_x]_i)^2 = \sum_{i=1}^m(2\sigma_v [W_x]_i)^2\\
        &\sum_{i=1}^m(2\sigma_v [W_x]_i)^2 = 4\sigma_v^2W_xW_x^T = \lambda_x .
    \end{align}
    Then, by Hoeffding's Inequality, we have
    \begin{align}
        \mathbb{P}(|W_x \V| \geq t) \leq 2 \exp\left(-2t^2/ \lambda_x \right)
    \end{align}
    Taking the complement of the probability, we obtain
    \begin{align}
        \mathbb{P}(|W_x \V| \leq t) \geq 1 - 2 \exp\left(-2t^2 / \lambda_x \right). 
    \end{align}
    Denoting $\delta =2 \exp\left(\frac{-2t^2}{\lambda_x} \right)$, we see that $ t = \sqrt{\frac{\lambda_x}{2}\text{ln}(\frac{2}{\delta})}$. Hence
    \begin{align}
        \mathbb{P}\left(|W_x \V| \leq \sqrt{\frac{\lambda_x}{2}\text{ln}(\frac{2}{\delta})}\right) \geq 1 - \delta. \label{prob_WV}
    \end{align}
    
    Then by combining Equations \eqref{err_bound_det} and \eqref{prob_WV}, we conclude the proof.
\end{proof}

\subsection{On $\lambda_x$ Tends to 0 with Increased Data}
\label{app:disscuss}
Recall that $\lambda_x = 4\sigma_v^2K_{x,\X}G^2 K_{\X,x}$ and $G = (K_{\mathcal{X},\mathcal{X}} + \sigma_n^2I)^{-1}$.

Let $\lambda_i$ be the $i$-th eigenvalue of $K_{\mathcal{X},\mathcal{X}}$. The eigenvalues of $G^{-1} = K_{\mathcal{X},\mathcal{X}} + \sigma_n^2I$ are then $\lambda_i + \sigma_n^2$. The eigen-decomposition of $G^{-1}$ is then written as $U(\Lambda + \sigma_n^2I)U^T$ where $U$ is an orthogonal matrix of eigenvectors and $\Lambda = \text{diag}(\lambda_i)$. Similarly, $G = U(\Lambda + \sigma_n^2I)^{-1}U^T$ which has eigenvalues $1/(\lambda_i + \sigma_n^2)$. It is known that as data is added to $K_{\mathcal{X},\mathcal{X}}$, the complexity of the matrix tends to grow which results in the eigenvalues differentiating into a cluster of larger eigenvalues and a cluster of 0 eigenvalues.

We can write $K_{\mathcal{X},x} = U\alpha$, where $\alpha$ represents the projection onto the eigenspace, which then makes the quadratic form 
\begin{align}
K_{x,\mathcal{X}}GK_{\mathcal{X},x} = \alpha^T(\Lambda + \sigma_n^2I)^{-1}\alpha = \sum_{i=1}^{m} \frac{\alpha_i^2}{\lambda_i + \sigma_n^2}. \label{quad_form_expansion}
\end{align}
We also note that it is well known that $K_{x,\mathcal{X}}GK_{\mathcal{X},x}\leq \kappa(x,x)$ for any $\sigma_n > 0$ and any $m$ as the variance prediction of a GP cannot be negative. This directly implies that the projection of $K_{\X,x}$ onto the eigenvectors of $G$ have minimal to zero value when $\lambda_i$ is close to 0 (i.e. $\alpha_i$ is small or zero) as this is the only way in which the sum is guaranteed to have a finite upper bound. This is due to the 0 eigenvalues of $K_{\mathcal{X},\mathcal{X}}$ being associated with eigenvectors that represent redundancy in data (or random noise in high-dimensional systems), hence they have minimal impact on the value of the inverse quadratic form. 
Using $G^{2}$ in the quadratic form simply results in the eigenvalues being squared in the denominator of Eqn~\eqref{quad_form_expansion}, which reduces the impact of large eigenvalues as well. This results in the value of $K_{x,\mathcal{X}}G^2K_{\mathcal{X},x}$ tending towards 0 as more data is added. 

However, while the limit $\lambda_x \rightarrow 0$ is theoretically feasible, it is likely computationally intractable even for reasonable choices of $\sigma_n$ due to the conditioning of $G$ and the $\mathcal{O}(m^3)$ operation need to invert an $m \times m$ matrix. The term $\sigma_n$ can be considered as a regularization term that is designed to enable the invertability of $K_{\mathcal{X},\mathcal{X}} + \sigma_n^2I$ and also limits the upper bound of eigenvalues of $G$ to $1/(\sigma_n^2)$. This enables the mathematical formulation to remain sound despite the computational intractability.

\subsection{Proof of Corollary \ref{cor:uniform_bound}}
\label{App:cor_proof}
\begin{proof}
    Let $\overline{\epsilon}_{X'}(\delta) = \sup_{x\in X'}{\epsilon}(\delta,x)$, then $\epsilon(\delta,x) \leq \overline{\epsilon}_{X'}(\delta)$ for all $x\in X'$ by definition of supremum.
    When $\mathbb{P}\Big(|\mu_{D}(x) - f(x)| \leq \epsilon(x,\delta)\Big) \geq 1 - \delta$ holds, the probabilistic relation is still retained when $\epsilon(\delta,x)$ is substituted with any larger value. Hence $\overline{\epsilon}_{X'}(\delta)$ is a valid substitution for any $x \in X'$.
\end{proof}

\section{Appendix: Experiment Information}
\subsection{RKHS Norms and Data Domains}
\label{App:norms_domains}
We use results from \cite{hashimoto2022learning} to generate a bound on the RKHS norm of each system, we conservatively increase the bound determined through this method and validated our results through an empirical analysis of the bounds. In all cases our probabilistic bounds remained accurate for any $\delta$ chosen. We list the bounds on the norm and the domains over which this norm is valid in Table \ref{tab:rkhs_bounds}.

\begin{table*}[ht]
    \centering
    \begin{tabular}{c | c | c}
        \toprule
         System & $B_i$ & Domain\\
         \midrule
         1D Non-Linear & [40] & [0, 10]\\
         2D Linear & [10, 10] & $[-2, 2]\times[-2, 2]$\\
         2D Non-Linear & [10, 10] & $[-2, 2]\times[-2, 2]$\\
         3D Dubin's Car & [20, 5, 5] & $[0, 10]\times[0, 2]\times[-0.5, 0.5]$\\
         5D Second Order Car & [8, 8, 5, 5, 5] & $[-2, 2]\times[-2, 2]\times[-0.5, 0.5]\times[-0.5, 0.5]\times[-0.5, 0.5]$\\
         Inverted Pendulum & [2, 2] & $[-0.5, 0.5]\times[-1.1, 1.1]$ \\
         4D Linear & [1.4, 1.5, 1.4, 0.9] & $[-0.7, 0.7]\times[-0.7, 0.7]\times[-0.7, 0.7]\times[-0.7, 0.7]$
    \end{tabular}
    
    \caption{RKHS norm bounds used in experiments.}
    \label{tab:rkhs_bounds}
\end{table*}

\subsection{DKL Architectures}
\label{App:DKL_arch}

We provide the architectures for each network used in our DKL priors in Table \ref{tab:DKL_structures}. We note that for most systems, a single network could accurately predict the evolution of $f$ for all dimensions. However, the CartPole system have highly divergent dynamics for the velocities, requiring significant depth to accurately capture the evolution and enable our prior to remain accurate. The necessary network dimensions were determined by assessing when the error bounds remained valid with DKL (i.e. a well behaved network was learned). All models were trained via stochastic mini-batching (using 1/50-th of the data set each epoch) and all data was generated via a 1-time step evolution of the system from uniformly distributed points in the associated domain.

\begin{table*}[!ht]
    \centering
    \begin{tabular}{c | c | c | c | c | c}
        \toprule
         System & Activation & Hidden Layers & Layer Width & Training Data & Predictive Points\\
         \midrule
         2D Linear & GeLU & 2 & 16 & 1000 & 100\\
         2D Non-Linear & GeLU & 2 & 64 & 1000 & 200\\
         3D Dubin's Car & GeLU & 2 & 128 & 10000 & 400\\
         5D Second Order Car & GeLU & 2 & 128 & 10000 & 250 \\
         Inverted Pendulum ($\theta$) & GeLU & 1 & 32 & 50000 & 600 \\
         Inverted Pendulum ($\dot{\theta}$) & Tanh & 2 & 32 , 4 & 50000 & 600 \\
         4D Linear & GeLU & 1 & 16 & 20000 & 200
    \end{tabular}
    \caption{DKL neural network prior details. All DKL models use the exact same set of data for posterior predictions as the SE models (Predictive Points).}
    \label{tab:DKL_structures}
\end{table*}

\begin{table*}[!ht]
    \centering
    \begin{tabular}{c | c | c}
        \toprule
         System & SE min & DKL min\\
         \midrule
         2D Linear & [1.8,1.9] $\times 10^{-8}$ & [0.19, 0.12] \\
         2D Non-Linear & [1.9, 1.9]$\times 10^{-8}$ & [5.9, 7.8]$\times 10^{-2}$\\
         3D Dubin's Car & [0.0, 0.0, 0.0] & [0.03, 0.07, 41.4]$\times 10^{-2}$ \\
         5D Second Order Car & [2.6, 3.2, 6.7, 2.9, 1.6] $\times 10^{-15}$ & [8.8, 5.9, 11.0, 29.3, 22.2]$\times 10^{-2}$
    \end{tabular}
    \caption{Minimum $\kappa(x,x')$ across 50000 test points.}
    \label{tab:inf_compare}
\end{table*}

\subsection{Barrier Details}
\label{App:Barrier}

The 4D linear dynamics are defined below:
\begin{align}
    x(k+1) = \begin{bmatrix}
        0.6 & 0.2 & 0.1 & 0 \\
        0 & 0.63 & 0.2 & 0.05 \\
        0 & 0 & 0.51 & 0.15 \\
        0 & 0 & 0 & 0.2
    \end{bmatrix} x(k) + \pv
\end{align}
Note that this system is clearly contractive, hence emphasizing the conservatism induced by loose regression error bounds when synthesizing a barrier. 

We bound noise on the pole angle ($\theta$) and angular velocity ($\dot{\theta}$) as $\pm 0.1$ degree(/sec) for the Inverted Pendulum model. We train a different NN for each dimension due to the complexity introduced by the unknown controller to enable more accurate predictions for each dimension and tighter linear relaxations of the networks to calculate interval bounds on $f$ for barrier synthesis.  
The safe states (in radians(/sec)) for the pendulum are $[\theta, \dot{\theta}] \in [-0.5, 0.5]\times[-1.1, 1.1]$. For the 4D linear model the safe states are $x \in [-0.7, 0.7]$ for each dimension.

The 4D linear model has a bound on noise of $\pm0.05$ for each dimension, also using a PERT distribution. As this model was much simpler to learn, we use a single network to predict each dimension.

We use methods shown in Work \cite{reed2023promises} to generate interval bounds on the GP mean predictions and upper bound the variance prediction over discrete regions. These same methods are used to identify a upper bound on $\lambda_x$ and $\Lambda_x$ to generate the most conservative interval bounds. Work \cite{jackson2021formal} defines how to generate an interval MDP model from these bounds/intervals and a partitioning of the space Wwork \cite{mazouz2024piecewise} defines how to generate a barrier given either intervals on $f$ or an interval MDP model.

For the Pendulum environment, we generate a discretization with 1600 states, evenly discretizing each dimension into 40 sections. For the 4D linear model we generate a discretization of 2401 states, evenly discretizing each dimension into 7 sections. For each region we make use of Corollary \ref{cor:uniform_bound}, modifying $\bar{\epsilon}_{X'}$ to make use of Theorems \ref{my_prob_theorem}-\ref{my_det_theorem} and Lemma \ref{lemma:2}, to generate intervals on $f$ for Barrier synthesis.

With the noise bound already being fairly small for the pendulum model, we were required to set $\sigma_n > \sigma_v$ in order to allow for bounding the kernel outputs, this is particularly critical when bounding the output variance as the program is very sensitive to the condition number of the kernel matrix. This introduces some conservatism into the error bounds in \cite{hashimoto2022learning} as it requires assuming a larger support on noise, but causes no changes for our own bounds. For the 4D model we kept $\sigma_n = \sigma_v$.

\subsection{RKHS Norms for DKL}
\label{App:DKL_valid_norms}

Assuming a given norm for a squared exponential kernel, we would like to validate that this norm will remain valid for DKL. We can assess this by validating that $\inf_{x,x'\in X}\kappa_{DKL}(x,x') \geq \inf_{x,x'\in X}\kappa(x,x')$. We empirically validate this inequality over 50000 random test points in our domain with results in Table \ref{tab:inf_compare}, in all cases the DKL model we generate has larger values.

\end{document}